\documentclass[11pt]{article}
\usepackage[round,authoryear]{natbib}
\usepackage{metacircle}
\usepackage{amsmath,amsfonts,bm}

\def\eqref#1{equation~\ref{#1}}

\def\1{\bm{1}}

\DeclareMathAlphabet{\mathsfit}{\encodingdefault}{\sfdefault}{m}{sl}
\SetMathAlphabet{\mathsfit}{bold}{\encodingdefault}{\sfdefault}{bx}{n}

\usepackage{amsthm}
\usepackage{capt-of}
\usepackage{placeins}
\usepackage{indentfirst}
\newtheorem{theorem}{Theorem}[section]

\newtheorem{proposition}[theorem]{Proposition}

\theoremstyle{definition}

\theoremstyle{remark}

\newcommand{\loss}{\mathcal{L}}
\newcommand{\hessian}{\mathbf{H}}

\definecolor{questionred}{RGB}{139,0,24}
\definecolor{questionback}{RGB}{244,238,239}
\newtcolorbox{researchquestion}[1]{
  enhanced,
  colback=questionback,
  colframe=questionred,
  colbacktitle=questionred,
  coltitle=white,
  fonttitle=\bfseries,
  title={#1},
  boxrule=1.1pt,
  arc=2mm,
  left=3mm,
  right=3mm,
  top=2mm,
  bottom=2mm,
  before skip=0.7\baselineskip,
  after skip=0.9\baselineskip
}

\title{Rethinking Activation Functions:\\ Reducing Bias Can Improve Learning}
\author{Zirui Peng$^{1,*}$, Heyi Zheng$^{2}$, Ziming Liu$^{3,4}$\\
  {\small $^{1}$Department of Physics, Peking University}\\
  {\small $^{2}$Department of Mathematics, Peking University}\\
  {\small $^{3}$College of AI, Tsinghua University}\quad
  {\small $^{4}$MetaCircle}\\
  {\small $^{*}$Email: \href{mailto:fredpengphy@stu.pku.edu.cn}{fredpengphy@stu.pku.edu.cn}}}
\date{}
\papertype{Preprint}
\paperdate{October 2026}

\begin{document}

\maketitle

\begin{metaabstract}
Many widely used activation functions, including ReLU, GELU, Swish, and Sigmoid, produce predominantly positive outputs, resulting in a nonzero post-activation mean that we term \emph{activation bias}. Despite its ubiquity, the effect of activation bias on learning dynamics remains poorly understood. Using controlled models, we show that activation bias accelerates initial loss reduction but is accompanied by overshooting, non-smooth trajectories, and a steeper loss landscape. This early advantage also coincides with the model's use of task-relevant information encoded in the activation bias component. In contrast, reducing activation bias slows initial learning but ultimately yields a lower final loss. Experiments on larger vision and language models exhibit a similar pattern: smaller activation bias is associated with better late-stage performance. Thus, this seemingly minor property can substantially redirect learning and warrants explicit consideration in future activation function design.
\end{metaabstract}

\begin{figure}[h]
  \centering
  \includegraphics[width=\linewidth]{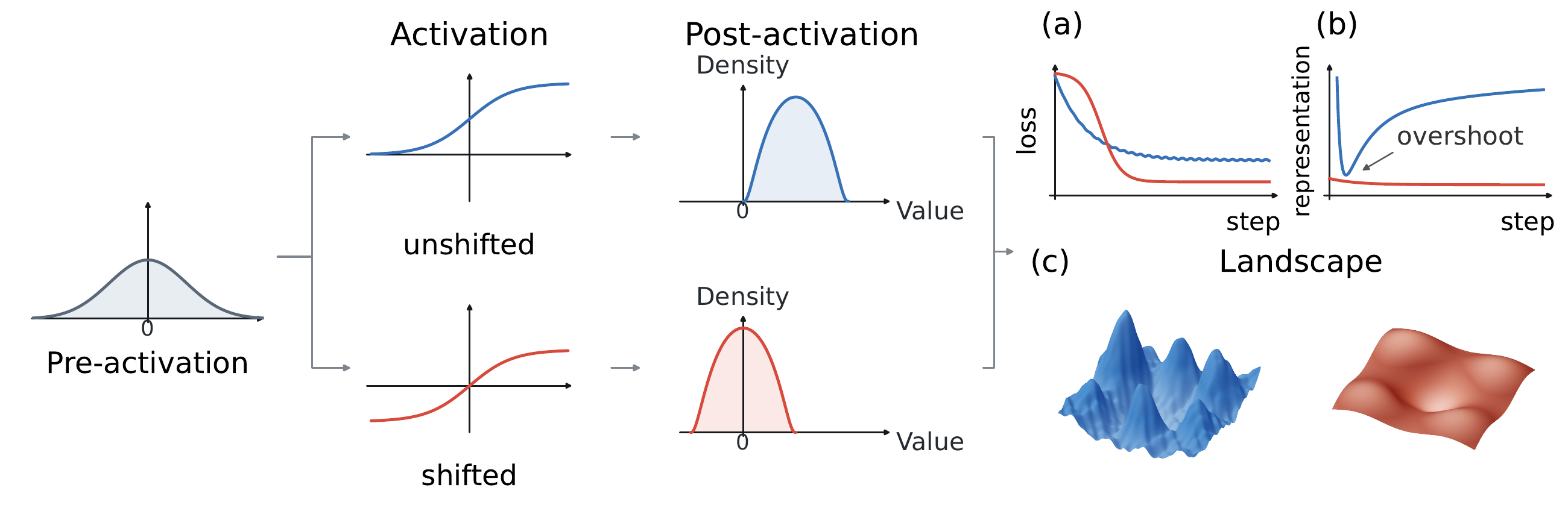}
  \caption{Overview of activation bias and its effects on learning. Blue denotes the
unshifted activation, and red denotes the shifted activation. Reducing
activation bias changes the post-activation distribution, model performance,
hidden-state trajectory, and loss landscape.}
  \label{fig:activation-bias-overview}
\end{figure}

\section{Introduction}
\label{sec:introduction}

Many common activation functions, including ReLU, GELU, Swish, and Sigmoid,
favor positive outputs while suppressing or excluding negative ones. These
asymmetries typically yield a nonzero post-activation mean. For
a pre-activation $z$, we define the \emph{activation bias} of an activation
function $\phi$ as the mean of its post-activation:
\begin{equation}
b_\phi \equiv \mathbb{E}[\phi(z)].
\label{eq:activation-bias}
\end{equation}
These functions are widely used and perform well across diverse tasks. Their
success, however, does not establish how the post-activation bias they induce
affects learning dynamics.

Prior work connects activation design to trainability through signal
propagation, dynamical isometry, critical initialization, and normalization
\citep{clevert2016fast,brock2021characterizing,qi2020deep,
lesser2026competing,bjorck2018understanding,santurkar2018how}. These studies
primarily address propagation stability or optimization conditioning, leaving
open how activation bias affects learning over time and whether the bias built
into common activations is optimal for learning. We therefore ask:

\begin{researchquestion}{Q: How does activation bias shape learning dynamics?}
(i) How does it alter training dynamics? (ii) Does the activation bias component
carry useful information, and how does the model use that information during
learning?
\end{researchquestion}

We first investigate the effect of activation bias on training dynamics in a
controlled deep residual teacher--student model. For each activation function,
we use a fixed post-activation shift to control the degree of activation bias. Both
students learn from the same fixed teacher, with identical initial parameters,
training data, and optimization settings; they differ only in whether the shift
is applied.

Reducing activation bias slows initial loss reduction, but the shifted student
eventually overtakes its unshifted counterpart and attains a lower final loss.
Theory and experiments associate this reversal with smoother
optimization dynamics, distinct parameter-update directions, and a flatter
loss landscape, reflected in substantially smaller Hessian eigenvalues.

We then use a controlled image-reconstruction task to test whether the activation bias
component carries information the model can exploit. Early in training, decoding
this component alone recovers recognizable image content. Its output later
departs from the full-representation reconstruction, suggesting that the model
relies less on this information as training proceeds. This transient use
reveals a non-smooth feature-learning trajectory and is largely suppressed when
activation bias is reduced.

Finally, we test whether these findings persist in larger models and across a
wider range of tasks.
These models exhibit analogous non-smooth
dynamics under the original activations, while reducing activation bias yields
comparable or better late-stage performance. Thus, this seemingly small
activation property can produce pronounced changes in learning across
architectures and tasks.

\begin{researchquestion}{Our contributions are the following:}
\begin{itemize}[leftmargin=5mm,labelsep=1.5mm,itemsep=2pt,topsep=1pt,
  parsep=0pt,partopsep=0pt]
  \item We use a controlled model to investigate activation bias and reveal its
  effect: it accelerates early learning but impedes later progress.
  \item We characterize how activation bias shapes optimization dynamics and
  how the model uses information carried by the activation bias component during learning.
  \item We observe similar learning dynamics and improved late-stage
  performance with reduced activation bias across larger
  models and a wider range of tasks.
\end{itemize}
\end{researchquestion}

To our knowledge, this is the first study to examine how activation bias shapes
learning dynamics throughout training. Our results show that a small change in
activation bias can redirect optimization, yield smoother learning
trajectories, and improve late-stage performance. This sensitivity calls for treating
activation bias as an explicit property in activation-function design rather
than an incidental by-product.

\section{Analysis of Training Dynamics}
\label{sec:toy-model}
\label{sec:landscape-analysis}

\subsection{Controlled Teacher--Student Setup}
\label{sec:toy-setup}

\begin{figure}[t]
  \centering
  \includegraphics[width=\linewidth]{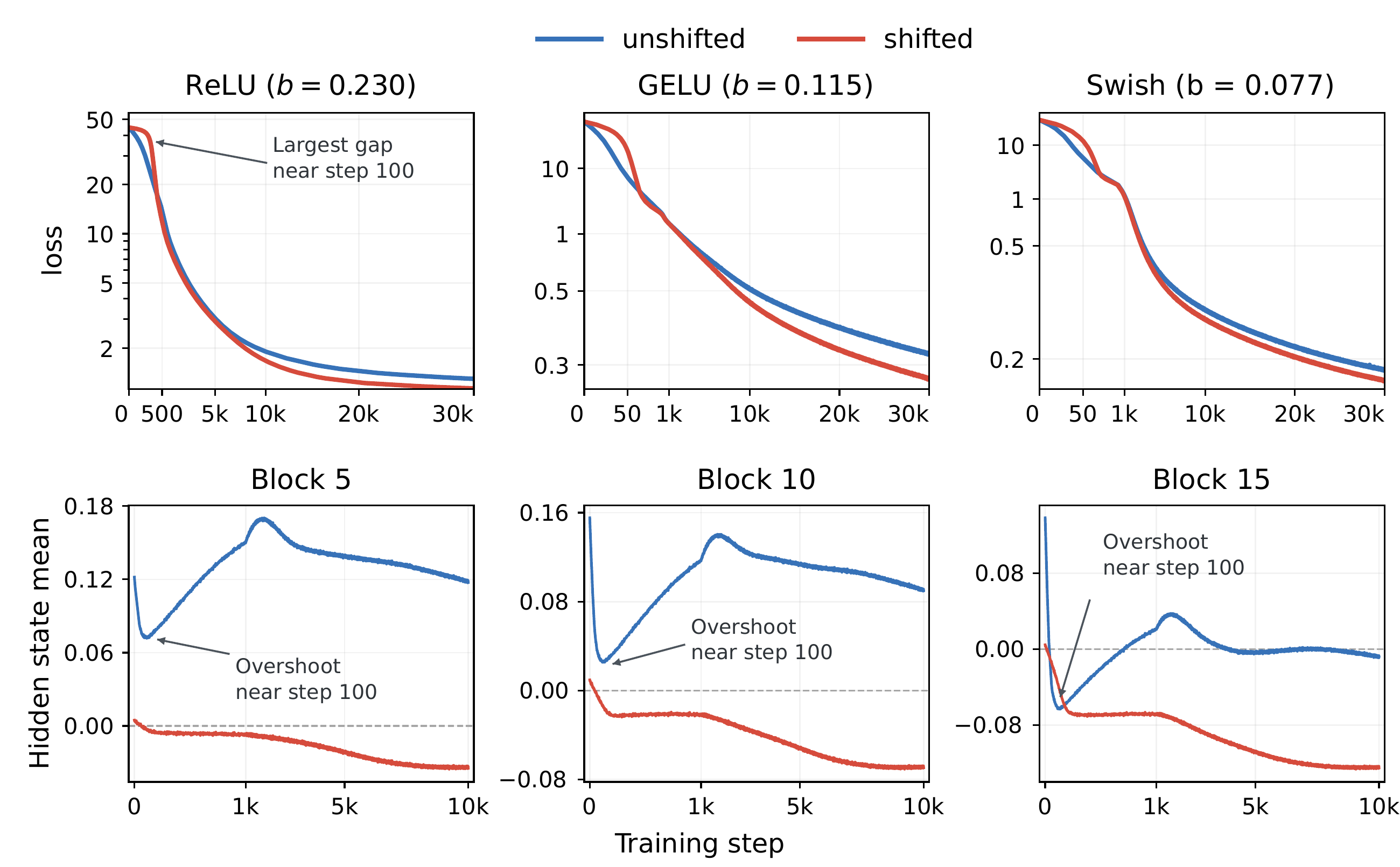}
  \caption{Activation bias accelerates early learning, but this advantage
coincides with a pronounced hidden-state overshoot and reverses later in
training. Top: The shifted model learns more slowly initially but ultimately
reaches a lower loss across activation functions. Bottom: The unshifted model's
hidden state overshoots and follows a non-smooth trajectory.}
  \label{fig:toy-loss-overshoot}
\end{figure}

To isolate the effect of activation bias from other factors, we adopt a simple
controlled model. We omit token embeddings, attention, and the output head to
isolate the activation function's role while retaining the pre-normalized MLP
blocks and residual-stream architecture of modern large language models.
For layers $\ell=0,\ldots,L-1$, the model evolves as
\begin{equation}
  \mathbf{x}^{(\ell+1)}
  = \mathbf{x}^{(\ell)}
  + \mathbf{W}_2^{(\ell)}
  \phi\!\left(
    \mathbf{W}_1^{(\ell)}
    \operatorname{LN}\!\left(\mathbf{x}^{(\ell)}\right)
  \right),
  \label{eq:toy-residual-block}
\end{equation}
where $\mathbf{x}^{(\ell)}$ denotes the residual-stream representation,
$\mathbf{W}_1^{(\ell)}$ and $\mathbf{W}_2^{(\ell)}$ are the up- and
down-projection matrices, respectively, $\operatorname{LN}$ denotes pre-normalization, and $\phi$ is the nonlinear
activation function.

We control activation bias by subtracting the same fixed scalar $b$ from the
activation output in every layer:
\begin{equation}
  \phi_{\mathrm{shifted}}(z;b) = \phi(z) - b.
  \label{eq:activation-shift}
\end{equation}

To avoid confounding activation bias with complex data structure that is
difficult to isolate and characterize, we use a teacher--student framework.
The teacher and both students share the architecture in
Equation~\ref{eq:toy-residual-block}.
A fixed teacher using the unshifted activation $\phi$ generates targets
$y=f_{\mathrm{T}}(\mathbf{x})$ for sampled inputs $\mathbf{x}$. We refer to the
student using $\phi$ as the \emph{unshifted student} and the student using
$\phi_{\mathrm{shifted}}$ as the \emph{shifted student}. For each activation,
we estimate $b$ as its expected post-activation output at initialization.
This shift keeps the post-activation mean near
zero over an extended period of training, as detailed in
Appendix~\ref{app:optimal-b}.

\begin{center}
  \begin{minipage}{\linewidth}
  \centering
  \captionof{table}{Activation shifts $b$ used for each activation function.}
  \label{tab:activation-shifts}
  \renewcommand{\arraystretch}{1.15}
  \setlength{\tabcolsep}{8pt}
  \begin{tabular}{c@{\hspace{5pt}}|@{\hspace{5pt}}cccc}
    \hline
    Activation functions & ReLU & GELU & Swish & Sigmoid \\
    \hline
    $b$ & $0.230$ & $0.115$ & $0.077$ & $0.500$ \\
    \hline
  \end{tabular}
  \end{minipage}
\end{center}

\subsection{Early Learning Advantage Coincides with Hidden-State Overshoot}
\label{sec:hidden-state-overshoot}

We first compare the two students through their loss curves. As shown in the
top panel of Figure~\ref{fig:toy-loss-overshoot}, the unshifted student learns
faster initially, but the shifted student eventually overtakes it and converges
to a lower loss. This reversal is notable because the unshifted student uses
exactly the same activation as the teacher yet is ultimately outperformed by
the shifted student.

Motivated by this loss reversal, we first track the residual-stream hidden
state over training. We average its elements for each input and then average
across inputs, yielding a scalar trajectory that tracks the hidden state's
evolution during training. In the bottom panel of
Figure~\ref{fig:toy-loss-overshoot}, the unshifted student's hidden state
overshoots around step $100$, coinciding with its period of faster loss
reduction. The shifted student's hidden state instead evolves smoothly.
These distinct hidden-state trajectories motivate examining whether the two
students also follow different paths in parameter space.

\subsection{Activation Bias Separates Parameter-Update Directions}
\label{sec:toy-landscape-geometry}

To compare parameter trajectories, let $\boldsymbol{\theta}_0$ be the common
initialization and let $\mathbf{g}_{\mathrm{u}}$ and $\mathbf{g}_{\mathrm{s}}$
be the students' initial gradients, where the subscripts $\mathrm{u}$ and
$\mathrm{s}$ denote the unshifted and shifted students, respectively. We
construct the \emph{initial-gradient plane} with orthonormal basis
\begin{equation}
  \mathbf{e}_1
  = \frac{\mathbf{g}_{\mathrm{u}}}{\lVert\mathbf{g}_{\mathrm{u}}\rVert},
  \qquad
  \mathbf{e}_2
  = \frac{(\mathbf{I}-\mathbf{e}_1\mathbf{e}_1^{\top})
  \mathbf{g}_{\mathrm{s}}}
  {\lVert(\mathbf{I}-\mathbf{e}_1\mathbf{e}_1^{\top})
  \mathbf{g}_{\mathrm{s}}\rVert},
  \label{eq:initial-gradient-plane}
\end{equation}
where $\mathbf{e}_1$ follows the unshifted gradient and $\mathbf{e}_2$ is the
orthogonal component of the shifted gradient.

To better understand the early optimization geometry, we visualize each student's loss landscape
on the initial-gradient plane. At each coordinate $(\alpha,\beta)$,
we reconstruct the parameters as
$\boldsymbol{\theta}_0+\alpha\mathbf{e}_1+\beta\mathbf{e}_2$ and evaluate
the corresponding student against the fixed teacher on the evaluation set.
The resulting losses form a heat map of the projected landscape. Projecting
training checkpoints onto the same plane overlays each student's trajectory.
As shown in the left and center panels of
Figure~\ref{fig:toy-landscape}, the two trajectories initially point in nearly
orthogonal directions. The unshifted student initially follows the direction
of strongest descent in its projected landscape and then turns sharply,
revealing a distinctly non-smooth trajectory. The shifted student's
loss varies little on this plane, indicating that its early update develops
largely outside these initial-gradient directions.

To visualize the shifted student's parameter movement outside this plane, we remove the
parts of its parameter displacement along $\mathbf{e}_1$ and $\mathbf{e}_2$:
\begin{equation}
  \mathbf{r}_t
  = (\mathbf{I}-\mathbf{e}_1\mathbf{e}_1^{\top}
  -\mathbf{e}_2\mathbf{e}_2^{\top})
  (\boldsymbol{\theta}(t)-\boldsymbol{\theta}_0).
  \label{eq:orthogonal-update-plane}
\end{equation}
Here, $\mathbf{r}_t$ is the component of the shifted student's parameter
displacement orthogonal to the initial-gradient plane. To obtain two orthogonal directions from the shifted student's parameter displacements outside the initial-gradient plane, we
apply Gram--Schmidt orthogonalization to these displacements and use the first
two resulting basis vectors $\mathbf{n}_1$ and $\mathbf{n}_2$ as its axes.
We then visualize the loss and trajectories on this plane
as above. The
right panel of Figure~\ref{fig:toy-landscape} reveals clear loss variation along
the shifted student's smooth trajectory without a comparable turn.

Together, these views show that activation bias induces qualitatively distinct
parameter-update directions. The two students initially move in nearly
orthogonal directions, and their subsequent updates remain largely separated:
the unshifted trajectory turns within the initial-gradient plane, whereas the
shifted trajectory develops primarily in complementary directions and evolves
smoothly. Further theoretical analysis and experiments on update directions are
provided in Appendix~\ref{app:hessian}.

\begin{figure}[t]
  \centering
  \includegraphics[width=\linewidth,trim=0 0 1.5in 0,clip]{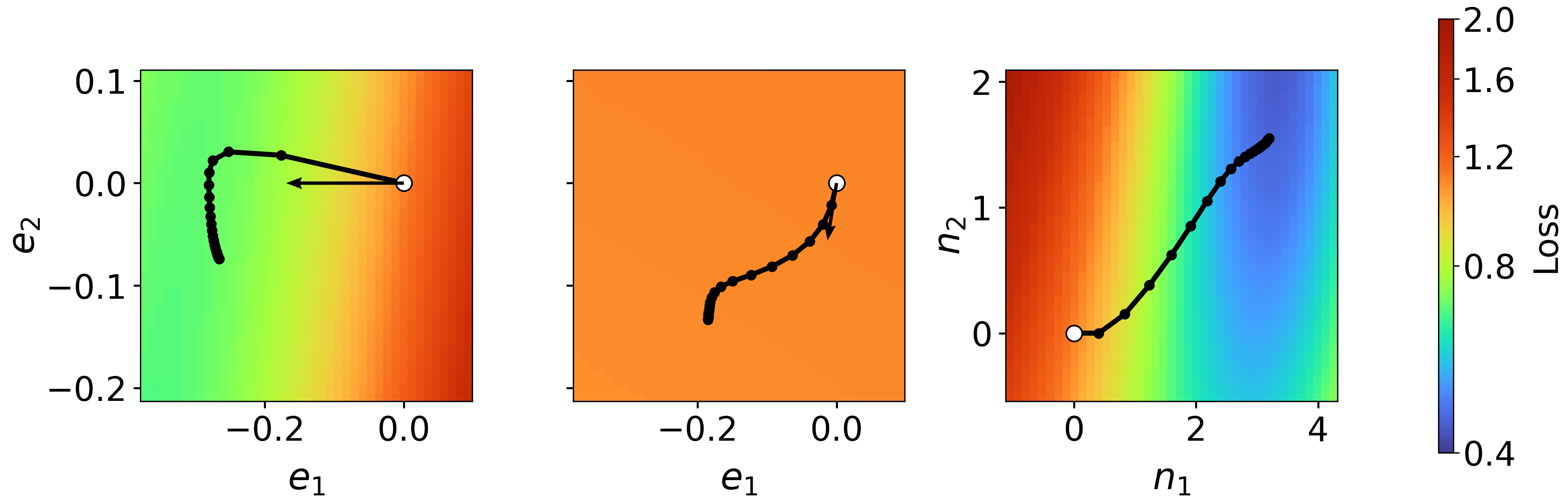}
  \caption{Activation bias redirects parameter updates, producing a distinct
trajectory with a sharp, non-smooth turn. Left: the
unshifted student on the initial-gradient plane. Center: the shifted student on
the same plane. Right: the shifted student on the complementary plane.
Colors closer to red indicate higher loss, while colors closer to blue indicate lower loss.}
  \label{fig:toy-landscape}
\end{figure}

\subsection{Activation Bias Steepens the Loss Landscape}
\label{sec:toy-hessian-sharpness}

Although the projected landscapes suggest differences in curvature, each shows
only a two-dimensional slice. We therefore quantify local curvature in
the full parameter space using the leading eigenvalues of the Hessian
$\hessian=\nabla^2\loss$, whose largest eigenvalue is commonly termed
sharpness. Figure~\ref{fig:toy-eigenvalues} shows that the shifted student has
substantially smaller eigenvalues throughout training. Thus, in this controlled
model, activation bias is associated with a markedly steeper local loss
landscape, whereas reducing the bias yields a flatter one.

\begin{figure}[t]
  \centering
  \includegraphics[width=\linewidth]{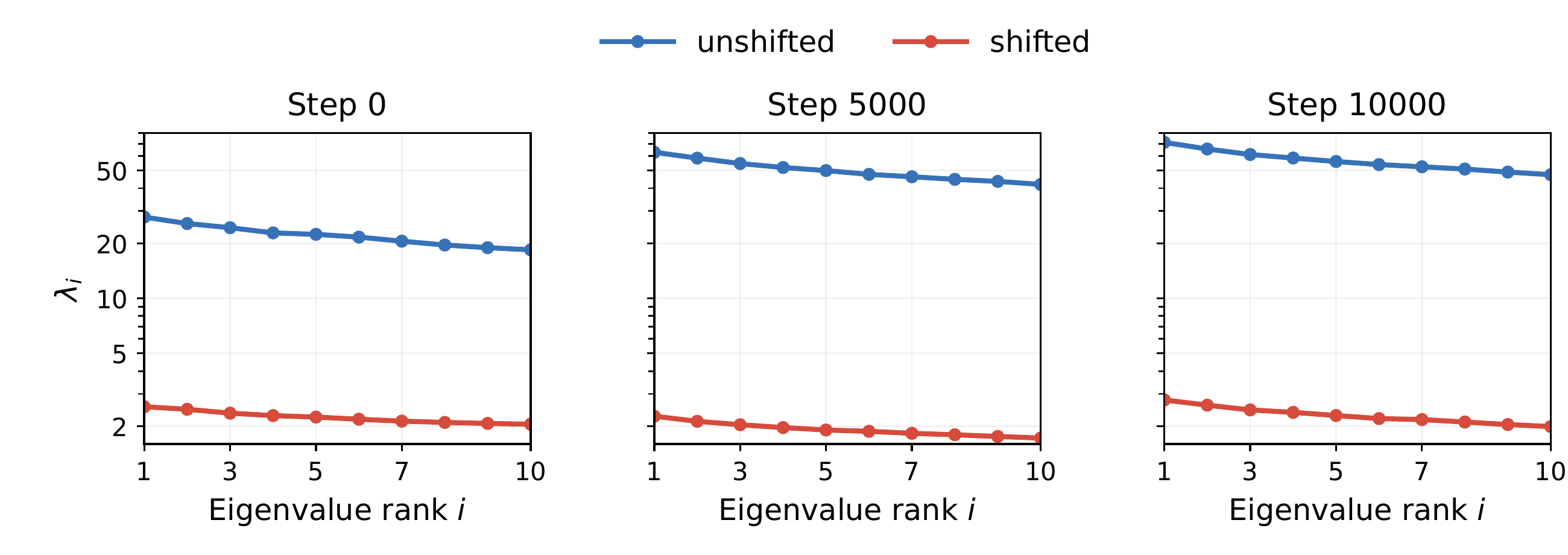}
  \caption{Reducing activation bias substantially lowers Hessian eigenvalues
throughout training, yielding a flatter local loss landscape.}
  \label{fig:toy-eigenvalues}
\end{figure}

To examine how activation bias affects loss landscape curvature, we consider a
simplified case with a post-activation vector $\mathbf a(x)$ and a trainable down-projection
$\mathbf W_2$ with squared loss
\begin{equation}
\mathcal L_b(\mathbf W_2)
=\frac12\mathbb E_x
\left\|\mathbf W_2\bigl(\mathbf a(x)-b\mathbf1\bigr)-\mathbf y(x)\right\|^2.
\label{eq:main-down-projection-loss}
\end{equation}
Suppose $\mathbb E[\mathbf a]=\mu\mathbf1$, and let
$\Sigma=\operatorname{Cov}(\mathbf a)$. The Hessian $H(b)$ with respect to
$\mathbf W_2$ then satisfies
\begin{equation}
H(b)
=\left[\Sigma+(\mu-b)^2\mathbf1\mathbf1^\top\right]\otimes I
\succeq \Sigma\otimes I,
\label{eq:main-mean-curvature}
\end{equation}
where $I$ is the identity matrix and $\otimes$ denotes the Kronecker
product. Since $(\mu-b)^2\mathbf1\mathbf1^\top\otimes I$ is positive
semidefinite, the largest eigenvalue satisfies
$\lambda_{\max}(H(b))\geq\lambda_{\max}(H(\mu))$. Thus, $b=\mu$ minimizes
Hessian sharpness in this setting.
A full proof and an extension to a more general multi-layer model are provided
in Appendix~\ref{app:hessian}.

Taken together, these measurements show that activation bias gives the
unshifted student a clear early learning advantage, but this advantage
coincides with an overshooting hidden-state trajectory and a markedly steeper
loss landscape. Reducing the bias slows initial learning, changes the
parameter-update directions, and produces smoother dynamics with stronger
long-term performance. Additional ablations are provided in
Appendix~\ref{app:toy-model-ablations}.

\section{ Analysis of Activation Bias Component Information}
\label{sec:feature-learning-analysis}

Section~\ref{sec:landscape-analysis} shows that
activation bias yields a transient early advantage yet worse long-run
optimization. We next ask whether the associated activation bias component
carries task-relevant information that the model uses.

\begin{figure}[t]
  \centering
  \includegraphics[width=\linewidth]{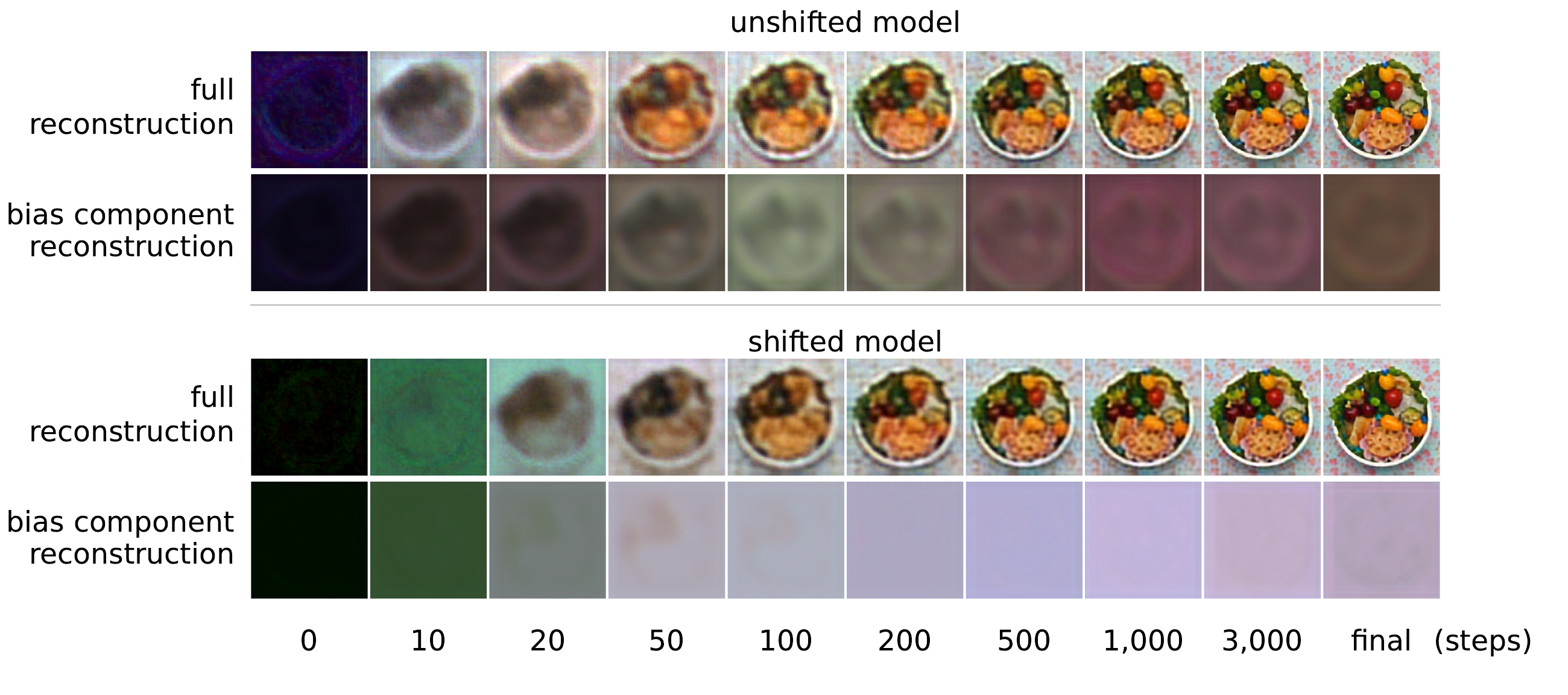}
  \caption{The unshifted model recovers recognizable image content from the activation
  bias component early in training but relies less on it later; the shifted
  model recovers little content from this component.}
  \label{fig:vgg-sample}
\end{figure}

\subsection{Image Reconstruction Setup}
\label{sec:feature-learning-setup}

Image reconstruction makes information carried by the activation bias component directly
observable in decoded images. We therefore choose this task and
use a simplified VGG-style autoencoder,
\[
  \mathbf{x}\ \xrightarrow{\ \mathrm{encoder}\ }\ \mathbf{h}
  \ \xrightarrow{\ \mathrm{decoder}\ }\ \widehat{\mathbf{x}},
\]
where $\mathbf{x}$ is an input image, $\mathbf{h}$ is its latent
representation, and $\widehat{\mathbf{x}}$ is the reconstructed image.
We use part of the VGG-19 architecture~\citep{simonyan2015very} as the encoder,
followed by a convolutional decoder. The latent representation is the encoder's
final post-activation output, with 512 channels at each spatial location.
The detailed architecture is provided in Appendix~\ref{app:vgg-reconstruction}.
We compare otherwise identical models using the unshifted and shifted activations.
Consistent with the controlled teacher--student model in
Section~\ref{sec:toy-model}, the unshifted model learns faster initially,
but the shifted model slightly surpasses it later in training, as shown in
Appendix~\ref{app:vgg-reconstruction}.

Throughout training, both decoders receive the full representation
$\mathbf{h}$. For post-hoc analysis at selected checkpoints, let
$\mathbf{h}_p\in\mathbb{R}^m$ denote the hidden state at spatial location
$p$, where $m=512$ is the number of channels. For every location $p$, we decompose it as
\begin{equation}
  \mathbf{h}_p
  = \underbrace{\left(\frac{1}{m}\mathbf{1}^{\top}\mathbf{h}_p\right)
  \mathbf{1}}_{\mathbf{h}_{p,\mathrm{bias}}}
  + \underbrace{\left(\mathbf{h}_p-\mathbf{h}_{p,\mathrm{bias}}\right)}_
  {\mathbf{h}_{p,\mathrm{dev}}}.
  \label{eq:bias-component-decomposition}
\end{equation}
\subsection{Visualization and Analysis of Activation-Bias-Component Information}
\label{sec:bias-component-analysis}

For each selected checkpoint, we pass the two components separately through the
decoder. Figure~\ref{fig:vgg-sample} shows the resulting
component-wise reconstructions. For the unshifted model, reconstruction from
$\mathbf{h}_{\mathrm{bias}}$ alone yields a recognizable, though incomplete,
image that retains coarse structure, color, and texture, implying that the model uses
some information encoded in the activation bias component. By contrast, the
corresponding reconstruction for the shifted model contains little recoverable
structure, suggesting that this model makes less use of the component.

The role of the unshifted model's activation bias component evolves non-monotonically.
Its isolated reconstruction becomes increasingly similar to the full
reconstruction, reaching its closest agreement early in training. It subsequently diverges
from the full reconstruction. This rise-and-fall
pattern mirrors the early learning advantage and hidden-state overshoot in
Section~\ref{sec:hidden-state-overshoot}.

Together, these visualizations show that activation bias can change how the model
uses image information during training. The unshifted model's transient use of
the activation bias component provides a feature-learning counterpart to its
early learning advantage, whereas the shifted model makes little use of this
component.

\section{Scale-Up Experiments}
\label{sec:scale-up}

We next examine whether our findings extend to larger, more complex
architectures and diverse tasks. We consider supervised image classification,
diffusion-based image generation, and language modeling. Unlike in the
controlled model, subtracting the same fixed scalar in every layer is
insufficient because activation bias differs across layers and evolves rapidly during
training. We therefore use \emph{adaptive shifting} as the bias-reduced
condition. This layerwise intervention subtracts the current post-activation
mean from every neuron in each layer:
\begin{equation}
  \widetilde{\phi}^{(\ell)}\!\left(\mathbf{z}^{(\ell)}\right)
  = \phi\!\left(\mathbf{z}^{(\ell)}\right)
  - \frac{1}{N_\ell}\sum_{i=1}^{N_\ell}
  \phi\!\left(z_i^{(\ell)}\right)\mathbf{1},
  \label{eq:adaptive-shifting}
\end{equation}
where $\mathbf{z}^{(\ell)}\in\mathbb{R}^{N_\ell}$ is the pre-activation vector
in layer $\ell$, $N_\ell$ is the number of neurons, and $i$ indexes neurons.
The shifted output has zero mean across neurons by construction.

\subsection{Vision Transformer Image Classification}
\label{sec:scale-up-vit}

We first consider Vision Transformers for supervised image classification. Following the ImageNet-1K
training protocol of
\citet{zhu2025transformers}, we train ViT-Base
\citep{dosovitskiy2021image} for $300$ epochs. We hold all other training choices
fixed across conditions, varying only whether adaptive shifting is applied.
Representative ViT-Base results are shown in
Figure~\ref{fig:vitb-residuals}. The model with adaptive shifting initially trails the
unshifted baseline by a substantial margin, but the ordering reverses during
training: by the end, it attains both lower test loss and higher top-1
classification accuracy, despite retaining a higher training loss. This result
is consistent with our controlled-model observation that reduced activation
bias is associated with lower Hessian sharpness and a flatter local landscape,
which is often linked to better generalization and less overfitting.

Figure~\ref{fig:scaleup-dynamics}(a) tracks the residual-stream hidden-state
trajectory using the statistic defined in Section~\ref{sec:hidden-state-overshoot}.
The unshifted model
exhibits a pronounced early overshoot during the period in which it holds a
substantial performance advantage. Under this larger architecture and more
complex task, both trajectories are noisy, but the adaptively shifted model
exhibits a substantially smaller overshoot. This contrast echoes the pattern
observed in the controlled model. Further examples of overshoot
trajectories in scale-up language models are provided in
Appendix~\ref{app:scaleup-overshoot}. Together, these results show that
reducing activation bias remains beneficial for image classification at larger
scale.

\begin{figure}[t]
  \centering
  \includegraphics[width=\linewidth]{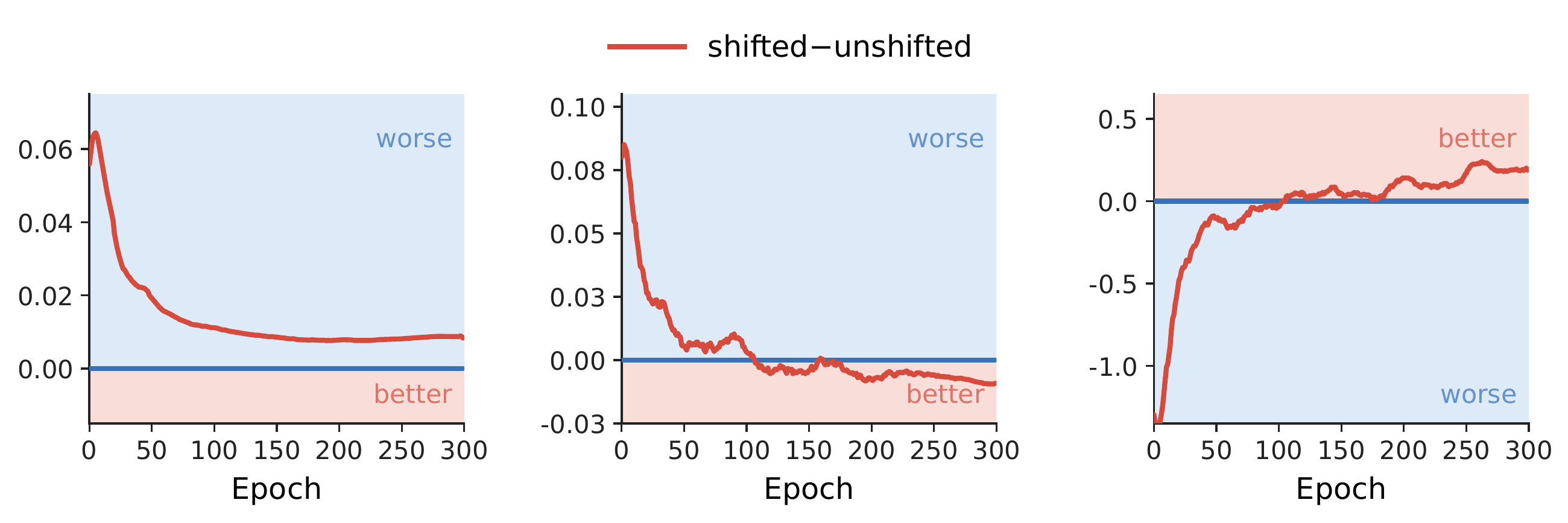}
  \vspace{-2.5mm}

  \makebox[\linewidth][l]{%
    \hspace{0.171\linewidth}\makebox[0pt][c]{\small (a) Training-loss residual}%
    \hspace{0.331\linewidth}\makebox[0pt][c]{\small (b) Test-loss residual}%
    \hspace{0.331\linewidth}\makebox[0pt][c]{\small (c) Top-1 accuracy residual}%
  }
  \caption{For the ViT image-classification task, the shifted model retains a
  higher training loss but ultimately achieves lower test loss and higher test
  accuracy. The loss
  residuals are $L_{\mathrm{shifted}}-L_{\mathrm{unshifted}}$, and the accuracy
  residual is $\mathrm{Acc}_{\mathrm{shifted}}-\mathrm{Acc}_{\mathrm{unshifted}}$.}
  \label{fig:vitb-residuals}
\end{figure}

\begin{figure}[t]
  \centering
  \includegraphics[width=\linewidth]{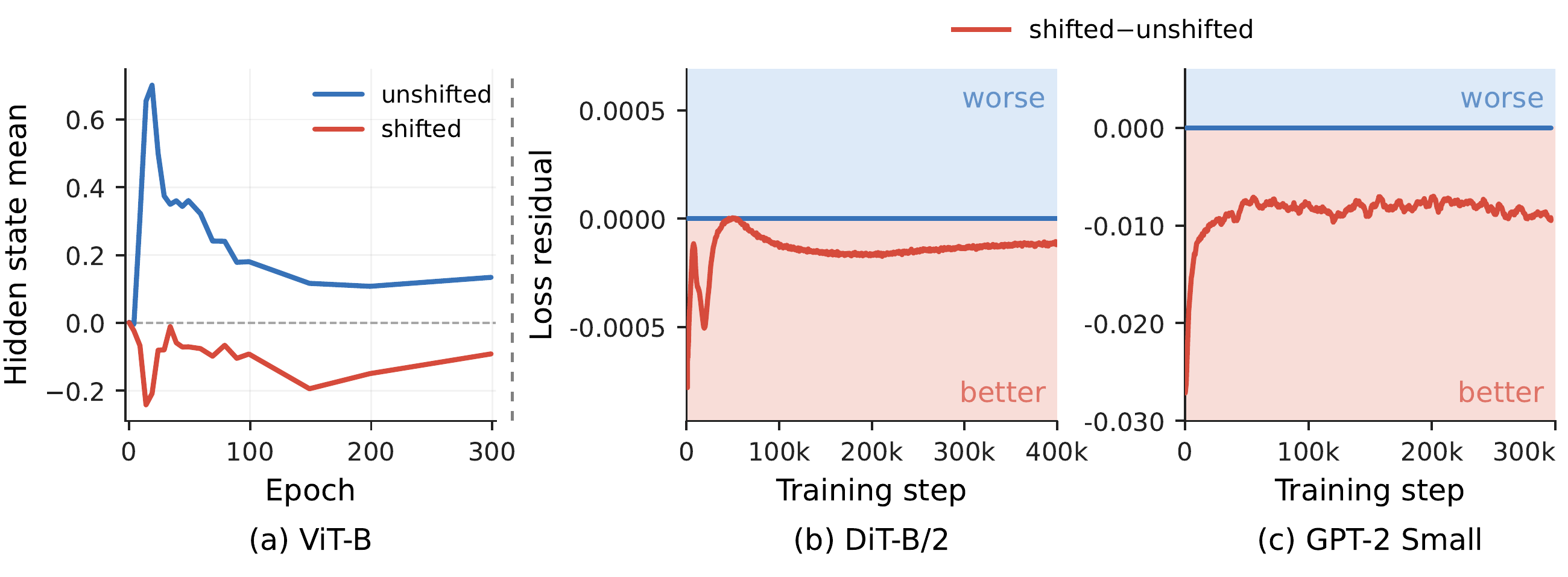}
  \caption{(a) Reducing activation bias suppresses hidden-state overshoot in
  ViT. (b) and (c): The shifted model performs better in both DiT image
  generation and GPT-2 language pretraining. The loss residual is
  $L_{\mathrm{shifted}}-L_{\mathrm{unshifted}}$.}
  \label{fig:scaleup-dynamics}
  \label{fig:gpt2-residual}
\end{figure}

\subsection{Diffusion Transformer Image Generation}
\label{sec:scale-up-dit}

We next consider image generation with Diffusion Transformers
(DiTs)~\citep{peebles2023dit}. Following the official
\texttt{facebookresearch/DiT} implementation, we train DiT-B/2 on ImageNet-256
for $400{,}000$ optimization steps. We hold all other training choices fixed and vary only
whether adaptive shifting is applied. As shown in
Figure~\ref{fig:scaleup-dynamics}(b), adaptive shifting achieves comparable or
lower training loss throughout training. Additional results in
Appendix~\ref{app:additional-scaleup} show that the shifted model achieves lower FID at every evaluated
checkpoint. These results demonstrate that reducing activation bias can also
benefit larger-scale image generation.
\subsection{GPT-2 Pretraining}
\label{sec:scale-up-gpt2}

We then move from vision to language modeling. Following Karpathy's training
setup~\citep{karpathy2022nanogpt}, we train GPT-2 Small~\citep{radford2019language}
from scratch on the full OpenWebText corpus~\citep{gokaslan2020openwebtext}
for $300{,}000$ optimization steps. As above, we
hold all other training choices fixed and vary only whether adaptive shifting
is applied.
Figure~\ref{fig:scaleup-dynamics}(c) reports the results. Adaptive shifting
consistently improves the performance, stably lowering loss by about 0.01 in
absolute terms and perplexity by 1\% relative to the unshifted baseline.
These gains show that the effect of reducing activation bias extends to
language modeling.

The larger-scale models and tasks considered here are widely adopted
and have been extensively optimized, leaving relatively little headroom for
improvement from a single change to the activation. Even so, adaptive shifting
yields small but consistent gains. Together with the observed changes in
learning dynamics, these gains indicate that activation bias has a
non-negligible effect in practical models. Additional scale-up results are
provided in Appendix~\ref{app:additional-scaleup}.

\section{Discussions}
\label{sec:discussion}

The actual effect of activation bias may vary in different settings. The extent
of early acceleration, hidden-state overshoot, and final performance differences
may depend on data and task structure, model scale, architectural components
such as attention, and weight initialization. For example, during GPT-2 Small
pretraining, the unshifted model does not exhibit the initial performance
advantage seen in the controlled model, potentially due to the small
initialization scale used by this architecture. Nevertheless, across the
settings studied here, smaller activation bias consistently yields better
late-stage performance. How these factors interact with activation bias merits
further investigation.

We use fixed or adaptive shifts to isolate a learning-dynamical consequence of
activation bias. Although reducing the bias improves performance in our
experiments, these results neither establish that zero post-activation bias
is universally optimal nor guarantee that either intervention will benefit every
architecture, task, or training regime. Performance can also depend on other
desirable activation properties, including selectivity, signal scale, and
smoothness. The methods used here to control activation bias therefore serve
as diagnostic interventions instead of general prescriptions for activation
design.

Our results identify a set of connected signatures of activation bias: an
early--late learning pattern, hidden-state overshoot and non-smooth dynamics,
distinct parameter-update directions, a steeper loss landscape, and transient
use of information in the activation bias component. We do not yet have a
unified mathematical theory or a complete causal account of how these effects
arise. Moreover, our larger-model experiments do not systematically test
whether all the phenomena and theoretical results from the controlled setting
extend to more complex architectures. Addressing these gaps could provide
more principled guidance for the design of new activation functions.

Activation functions have many properties beyond bias that we do not study
here. Post-activation variance is one example: the variance most conducive to
learning may depend on architecture, task, scale, and training regime, yet its
effects on learning dynamics and final performance remain unclear. Future work
could isolate variance and other properties of activation functions under controlled conditions, track their effects throughout
training, and test whether the resulting patterns transfer across models and
tasks. Such analyses would provide a broader foundation for activation-function
design.

\section{Related Work}
\label{sec:related-work}

Activation-function design spans nonlinearities with markedly different
input--output profiles. Early work used sigmoid and tanh
\citep{lecun1998efficient}; later proposals introduced Softplus
\citep{dugas2000incorporating}, ReLU~\citep{nair2010relu},
GELU~\citep{hendrycks2016gelu}, and Swish~\citep{ramachandran2017swish}.
Leaky ReLU~\citep{maas2013rectifier}, PReLU~\citep{he2015delving}, and
Mish~\citep{misra2020mish} further broadened this design space. More recent
work has explored gated activations such as SwiGLU~\citep{shazeer2020glu},
learnable activations such as ACON~\citep{ma2021activate}, and alternatives
such as SMU~\citep{biswas2022smooth} and
StarReLU~\citep{yu2024metaformer}. These activation functions have proved effective across tasks, yet
many of them produce predominantly positive outputs and thus introduce activation
bias. While prior work has focused on designing and evaluating the functions
themselves, the effect of this bias on learning remains less understood. Our
work complements these studies by isolating activation bias and examining how
it shapes learning dynamics.

Previous work on activation functions and their statistics has primarily
sought to improve trainability. To stabilize intermediate signals,
normalization methods center or rescale
them~\citep{ioffe2015batch,ba2016layer,arpit2016normalization,bjorck2018understanding};
mean-only batch normalization retains centering alone, whereas RMSNorm
retains rescaling alone~\citep{salimans2016weight,zhang2019root}.
\citet{santurkar2018how} showed that batch normalization also smooths the
loss landscape. To sustain signal and gradient flow through depth,
mean-field and dynamical-isometry studies analyze activation
choices~\citep{schoenholz2017deep,pennington2017resurrecting,hayou2019impact,lesser2026competing},
and \citet{qi2020deep} designed a shifted ReLU to improve isometry. To
counter shifts associated with nonzero activation outputs,
\citet{raiko2012deep} centered neuron outputs and slopes to improve gradient
conditioning, \citet{eidnes2018shifting} introduced bipolar activations to
facilitate deep training, and \citet{kutsuna2019linearly} constrained weights
to reduce shifts in downstream pre-activations. These studies alter
normalization, activation shape, or weights, whereas we isolate activation
bias as a single factor and investigate its effects on learning dynamics.

Some studies have examined nonzero activation bias.
\citet{glorot2010understanding}
linked the positive mean of sigmoid activations to upper-layer saturation at
initialization. \citet{clevert2016fast} showed that ELUs can reduce bias shift,
accelerate learning, and improve classification accuracy;
\citet{klambauer2017self} designed SELUs to stabilize activation means and
variances across depth; and \citet{brock2021characterizing} controlled the
growth of positive channel means in unnormalized ResNets. These studies
establish consequences of activation bias in restricted settings. We examine
its effects throughout training and show that reducing it can improve
model performance across different architectures and tasks.
\section{Conclusion}
\label{sec:conclusion}

We study how activation bias shapes training dynamics and the use of
task-relevant information. In the controlled setting, activation bias confers
an early learning advantage but is associated with hidden-state overshoot,
non-smooth trajectories, a steeper loss landscape and worse late-stage performance. Our reconstruction analysis further
shows that the learned model transiently uses information in the bias
component. Across larger and more diverse settings, adaptive shifting retains a
late-stage performance advantage. These results motivate a more complete causal
account of activation bias and its explicit consideration in future
activation-function design.

\section*{AI Use Statement}

We used generative AI tools to brainstorm ideas, formulate candidate
mathematical claims, draft code following explicit author instructions, and
polish the writing. These tools were not used to create
synthetic datasets, perform data analysis, or write mathematical proofs.
The authors critically reviewed all AI-assisted ideas and text,
independently inspected and verified generated code, and conducted the data
analysis and proofs themselves. The authors take full responsibility for the
final content of this paper.

\bibliography{references}
\bibliographystyle{plainnat}

\clearpage
\appendix

\section{Estimating $b$ for Different Activation Functions}
\label{app:optimal-b}

This appendix examines fixed, layer-shared activation shifts in the controlled
model. We first derive the shift $b_{\mathrm{theory}}$ that brings the
post-activation mean near zero at initialization and show that the empirical
means remain close to zero during training. We then sweep $b$ and measure the
final loss. Across ReLU, GELU, Swish, Tanh, and sigmoid, the
theory-predicted shifts achieve near-optimal final loss.

\subsection{Theory-predicted activation means}
\label{app:optimal-b-theory}

For a pre-norm MLP block, let $\mathbf{x}\in\mathbb{R}^d$ be its input and
$\mathbf{u}=\operatorname{LN}(\mathbf{x})$. Let $\mathbf{w}_i$ be the $i$th row of the
up-projection linear layer. The pre-activation of neuron $i$ is
\begin{equation}
  Z_i = \mathbf{w}_i^\top \mathbf{u}.
  \label{eq:preactivation-projection}
\end{equation}
The weights are initialized independently of $\mathbf{u}$ as
\begin{equation}
  w_{ij}\overset{\mathrm{i.i.d.}}{\sim}
  \mathcal{U}\!\left[-\frac{1}{\sqrt{d}},
  \frac{1}{\sqrt{d}}\right].
\end{equation}
Using $\|\mathbf{u}\|_2^2\approx d$ from LayerNorm gives
\begin{equation}
  \mathbb{E}_{\mathbf{w}_i,\mathbf{x}}[Z_i]=0,
  \qquad
  \mathbb{E}_{\mathbf{w}_i,\mathbf{x}}[Z_i^2]
  = \frac{1}{3d}\mathbb{E}_{\mathbf{x}}\|\mathbf{u}\|_2^2
  \approx \frac{1}{3}.
\end{equation}
We approximate the pre-activations by
\begin{equation}
  Z\sim\mathcal{N}\!\left(0,\frac{1}{3}\right).
  \label{eq:gaussian-preactivation-approximation}
\end{equation}
The shifted activation is
\begin{equation}
  \phi_{\mathrm{shifted}}(Z;b)=\phi(Z)-b.
\end{equation}
Thus,
\begin{equation}
  b_{\mathrm{theory}}(\phi)
  :=\mathbb{E}[\phi(Z)].
  \label{eq:b-theory-definition}
\end{equation}

\paragraph{ReLU.}
For $\phi(z)=\max\{0,z\}$,
\begin{align}
  b_{\mathrm{theory}}(\operatorname{ReLU})
  &= \int_{-\infty}^{\infty}\max\{0,z\}
  \frac{1}{\sqrt{2\pi/3}}
  \exp\!\left(-\frac{z^2}{2/3}\right)\,\mathrm{d}z \notag\\
  &= \sqrt{\frac{3}{2\pi}}\int_0^\infty
  z\exp\!\left(-\frac{3z^2}{2}\right)\,\mathrm{d}z \notag\\
  &= \frac{1}{\sqrt{6\pi}} \notag\\
  &\approx 0.230.
  \label{eq:relu-theory-shift}
\end{align}

\paragraph{GELU.}
We use the exact GELU, $\phi(z)=z\Phi(z)$, where $\Phi$ and $\varphi$
are the standard Gaussian cumulative distribution and density functions.
\begin{align}
  b_{\mathrm{theory}}(\operatorname{GELU})
  &= \int_{-\infty}^{\infty}z\Phi(z)
  \frac{1}{\sqrt{2\pi/3}}
  \exp\!\left(-\frac{z^2}{2/3}\right)\,\mathrm{d}z \notag\\
  &= \frac{1}{\sqrt{6\pi}}\int_{-\infty}^{\infty}
  \varphi(z)\exp\!\left(-\frac{3z^2}{2}\right)\,\mathrm{d}z \notag\\
  &= \frac{1}{\sqrt{24\pi}} \approx 0.115.
  \label{eq:gelu-theory-shift}
\end{align}

\paragraph{Swish.}
For $\phi(z)=z/(1+e^{-z})$,
\begin{align}
  b_{\mathrm{theory}}(\operatorname{Swish})
  &= \int_{-\infty}^{\infty}
  \frac{z}{1+e^{-z}}
  \frac{1}{\sqrt{2\pi/3}}
  \exp\!\left(-\frac{z^2}{2/3}\right)\,\mathrm{d}z \notag\\
  &\approx 0.077.
  \label{eq:silu-theory-shift}
\end{align}
\paragraph{Tanh.}
Since $\tanh$ is odd,
\begin{equation}
  b_{\mathrm{theory}}(\tanh)
  = \mathbb{E}[\tanh(Z)] = 0.
  \label{eq:tanh-theory-shift}
\end{equation}

\paragraph{Sigmoid.}
For $\phi(z)=1/(1+e^{-z})$, we have $\phi(z)+\phi(-z)=1$.
Since $Z$ and $-Z$ have the same distribution,
\begin{align}
  b_{\mathrm{theory}}(\operatorname{sigmoid})
  &= \mathbb{E}[\phi(Z)] \notag\\
  &= \frac12\mathbb{E}[\phi(Z)+\phi(-Z)] \notag\\
  &= \frac12 = 0.500.
  \label{eq:sigmoid-theory-shift}
\end{align}

Figure~\ref{fig:activation-mean-dynamics} shows that the predicted shifts
keep the post-activation means near zero during training.

\begin{figure}[!htbp]
  \centering
  \includegraphics[width=0.95\linewidth]{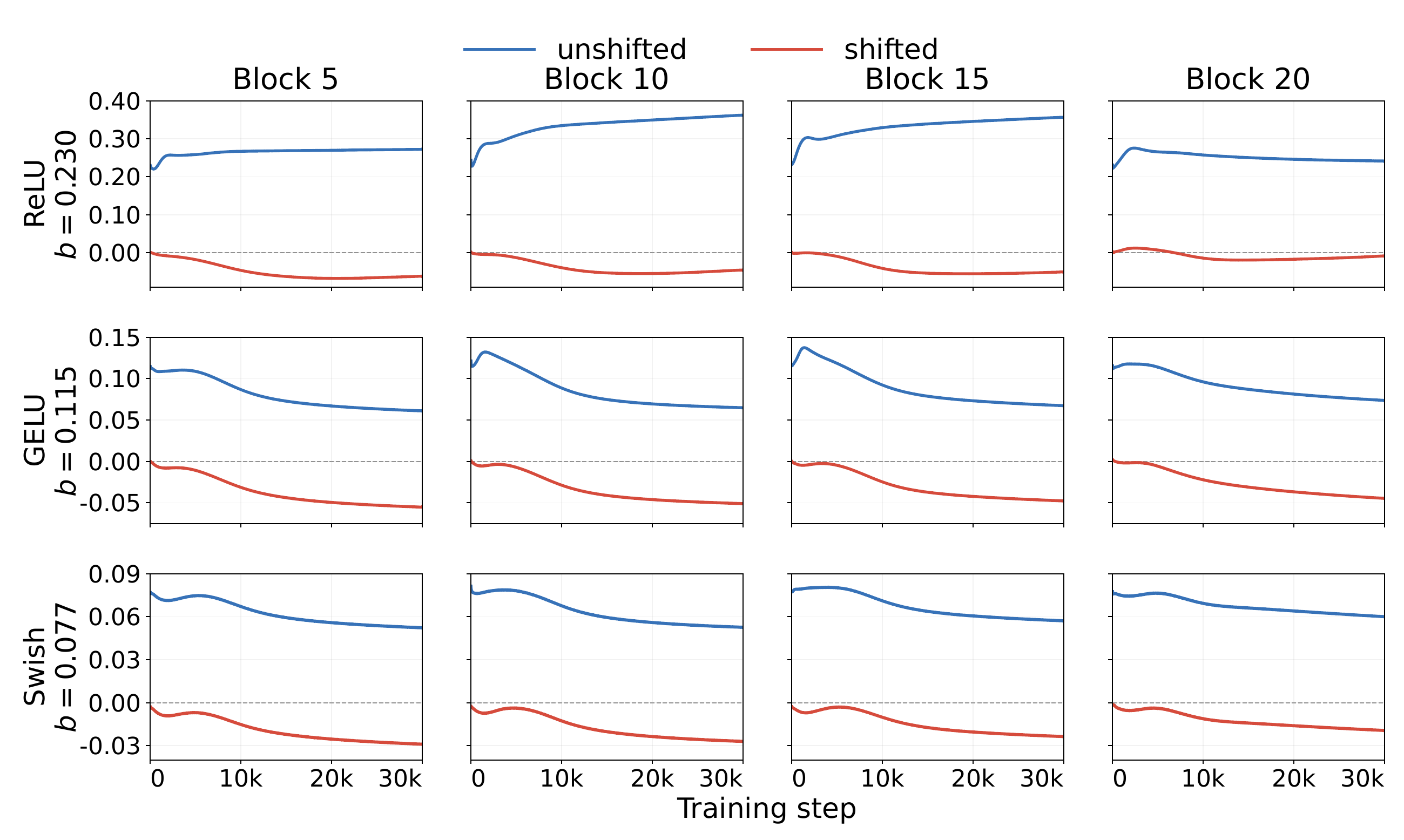}
  \caption{The post-activation means of
  the shifted models remain near zero throughout training.}
  \label{fig:activation-mean-dynamics}
\end{figure}

\subsection{Final loss as a function of the activation shift}
\label{app:optimal-b-experiment}

Using the same teacher--student training procedure, we sweep the shift $b$
for each activation over multiple random seeds. Within each seed, all compared
shifts use the same teacher, student initialization, and sequence of training
batches.
Figure~\ref{fig:activation-final-loss-vs-b} plots final loss
against $b$. Across the tested activations, the theory-predicted shifts achieve
near-optimal final loss.

\begin{figure}[!htbp]
  \centering
  \includegraphics[width=\linewidth]{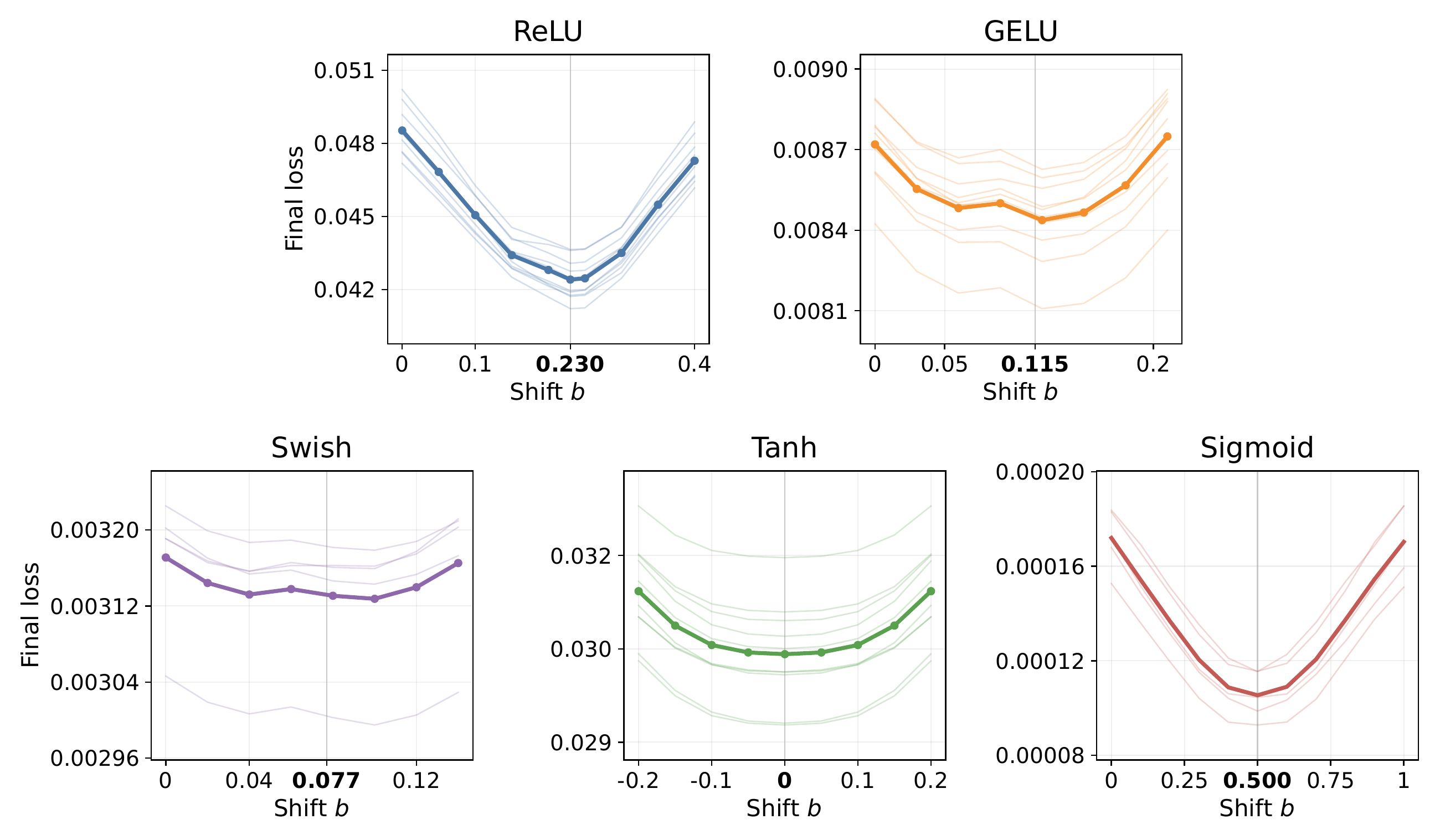}
  \caption{Across all five activations, our theory's predicted shifts achieve
  near-optimal final loss. Thin curves show individual seeds, and thick curves
  show their mean.}
  \label{fig:activation-final-loss-vs-b}
\end{figure}

\section{Theoretical analysis of optimization landscape}
\label{app:hessian}

This appendix analyzes how activation bias affects loss landscape curvature
and changes the direction of parameter updates early in
training.

\subsection{Activation mean and landscape sharpness}
\label{app:sharpness-theory}

\subsubsection{A single trainable down-projection}

We start with the simplest setting, where $a(x)\in\mathbb{R}^{m}$ is fixed and
only a down-projection layer $B\in\mathbb{R}^{d_{\mathrm{out}}\times m}$ is trained.
Define
\begin{equation}
  z_b(x):=a(x)-b\mathbf{1},
  \qquad
  f_B(x)=Bz_b(x).
  \label{eq:single-down-model}
\end{equation}
For target $y(x)$, define
\begin{align}
  \mathcal{L}_b(B)
  &=\frac{1}{2}\mathbb{E}_x\left\|Bz_b(x)-y(x)\right\|^2,
  \label{eq:single-down-loss}\\
  M_b&:=\mathbb{E}_x\left[z_b(x)z_b(x)^\top\right].
  \label{eq:feature-second-moment}
\end{align}

\begin{proposition}[Exact down-projection sharpness]
\label{prop:single-down-sharpness}
The Hessian with respect to the vectorized matrix $B$ is
\begin{equation}
  H_B(b)=M_b\otimes I_{d_{\mathrm{out}}},
  \qquad
  \lambda_B(b):=\lambda_{\max}\!\left(H_B(b)\right)
  =\lambda_{\max}(M_b).
  \label{eq:single-down-hessian}
\end{equation}
Here $\otimes$ denotes the Kronecker product.
Suppose
\begin{equation}
  \mathbb{E}[a]=\mu\mathbf{1},
  \label{eq:shared-neuron-mean}
\end{equation}
so all neurons have the same mean activation $\mu$. Then $b=\mu$ globally
minimizes $\lambda_B(b)$ over all scalar shifts.
\end{proposition}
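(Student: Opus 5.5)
We split the argument into two parts: the exact Hessian computation, and the minimization over $b$.

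\emph{Hessian computation.} Write the loss as a quadratic in $\operatorname{vec}(B)$. Using $Bz=(z^\top\otimes I_{d_{\mathrm{out}}})\operatorname{vec}(B)$ (column-stacking convention), we get
\[
\mathcal{L}_b(B)=\tfrac12\,\mathbb{E}_x\bigl\|(z_b(x)^\top\otimes I)\operatorname{vec}(B)-y(x)\bigr\|^2 .
\]
This is an exact quadratic form in $\operatorname{vec}(B)$. Its Hessian is therefore $\mathbb{E}_x[(z_b\otimes I)(z_b^\top\otimes I)]$. By the mixed-product rule this equals $\mathbb{E}_x[z_bz_b^\top]\otimes I=M_b\otimes I$. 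No approximation is involved, and the result does not depend on $B$ or $y$.

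The eigenvalues of a Kronecker product $A\otimes I$ are those of $A$, each repeated $d_{\mathrm{out}}$ times. Hence $\lambda_{\max}(H_B(b))=\lambda_{\max}(M_b)$.

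\emph{Minimization over $b$.} Decompose $M_b$ around the mean. Under the assumption $\mathbb{E}[a]=\mu\mathbf{1}$ in \eqref{eq:shared-neuron-mean}, we have $\mathbb{E}[z_b]=(\mu-b)\mathbf{1}$. Setting $\Sigma=\operatorname{Cov}(a)$, which does not depend on $b$, the cross terms cancel:
\[
M_b=\Sigma+(\mu-b)^2\,\mathbf{1}\mathbf{1}^\top .
\]
Both summands are positive semidefinite, so $M_b\succeq\Sigma=M_\mu$ in the Loewner order for every scalar $b$.

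By Courant--Fischer,
\[
\lambda_{\max}(M_b)=\max_{\|v\|=1}v^\top M_b v\ \ge\ \max_{\|v\|=1}v^\top\Sigma v=\lambda_{\max}(M_\mu).
\]
This gives $\lambda_B(b)\ge\lambda_B(\mu)$ for all $b$, so $b=\mu$ is a global minimizer. Strictness is not claimed: it fails when $\mathbf{1}$ is orthogonal to the top eigenspace of $\Sigma$ and $(\mu-b)^2m$ is small. We therefore only assert that $\mu$ attains the minimum.

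\emph{Expected obstacle.} The main care point is bookkeeping rather than any conceptual difficulty. We must fix the vectorization convention so that the factor order in the Kronecker product matches the stated $M_b\otimes I_{d_{\mathrm{out}}}$; row-major vectorization would give $I\otimes M_b$, which has the same spectrum. We also need finite second moments of $a(x)$ so that $M_b$ and $\Sigma$ are well defined.
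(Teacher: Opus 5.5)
Your proof is correct and follows the paper's: the paper reads off $H_B(b)=M_b\otimes I_{d_{\mathrm{out}}}$ from the second directional derivative $\operatorname{tr}(\Delta B\,M_b\,\Delta B^\top)$ rather than from explicit vectorization, and then uses the same decomposition $M_b=\Sigma+(\mu-b)^2\mathbf{1}\mathbf{1}^\top\succeq\Sigma=M_\mu$ together with monotonicity of $\lambda_{\max}$ in the Loewner order. Your remark that minimality need not be strict is consistent with the statement, which claims only that $b=\mu$ is a global minimizer, not the unique one.
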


\begin{proof}
For any perturbation $\Delta B$,
\begin{equation}
  \left.\frac{\mathrm{d}^2}{\mathrm{d}t^2}
  \mathcal{L}_b(B+t\Delta B)\right|_{t=0}
  =\mathbb{E}\|\Delta Bz_b\|^2
  =\operatorname{tr}(\Delta B M_b\Delta B^\top),
\end{equation}
which gives \eqref{eq:single-down-hessian}. Let
$\Sigma=\operatorname{Cov}(a)$. Then
\begin{equation}
  M_b
  =\Sigma+(\mu-b)^2\mathbf{1}\mathbf{1}^\top
  \succeq \Sigma=M_\mu.
  \label{eq:rank-one-mean-curvature}
\end{equation}
\begin{equation*}
  \lambda_B(b)
  =\lambda_{\max}(M_b)
  \ge\lambda_{\max}(M_\mu)
  =\lambda_B(\mu). \qedhere
\end{equation*}
\end{proof}

When neuron-wise activation means differ, write
\begin{equation*}
  \mathbb{E}[a]=\bar\mu\mathbf{1}+\varepsilon,
  \qquad
  \bar\mu:=\frac{1}{m}\mathbf{1}^\top\mathbb{E}[a],
  \qquad \varepsilon\perp\mathbf{1}.
\end{equation*}
Since $M_{\bar\mu}=\Sigma+\varepsilon\varepsilon^\top$ and
$M_b\succeq\Sigma$ for every $b$,
\begin{equation}
  0\le
  \lambda_{\max}(M_{\bar\mu})-\min_b\lambda_{\max}(M_b)
  \le \|\varepsilon\|^2.
  \label{eq:unequal-mean-robustness}
\end{equation}

Thus shifting by the average activation mean is within
$\|\varepsilon\|^2$ of the optimal scalar shift in largest-eigenvalue
curvature. The gap is small whenever $\|\varepsilon\|$ is small.

\subsubsection{Jointly training the up- and down-projections}

We then allow both the up-projection $A$ and the down-projection $B$ to be
trained. At a fixed parameter state, let
\begin{equation}
  a(x)=\phi(Ax),\qquad z_b(x)=a(x)-b\mathbf{1},
  \qquad f_{A,B}(x)=Bz_b(x),
\end{equation}
and assume $\mathbb{E}[a]=\mu\mathbf{1}$. Let
$D_x=\operatorname{diag}(\phi'(Ax))$. To first order,
\begin{equation}
  \Delta f(x)=BD_x\Delta A x+\Delta Bz_b(x).
  \label{eq:joint-first-order-output}
\end{equation}
Let $J=[J_A\;J_B]$ be the output Jacobian with respect to
$\theta=(\operatorname{vec}(A),\operatorname{vec}(B))$. Under column-wise
vectorization, \eqref{eq:joint-first-order-output} gives
\begin{equation}
  J_A(x)=x^\top\otimes(BD_x),
  \qquad
  J_B(x)=z_b(x)^\top\otimes I_{d_{\mathrm{out}}}.
  \label{eq:joint-jacobian-blocks}
\end{equation}
For squared loss, the Gauss--Newton (GN) matrix is
\begin{equation}
  G_b=\mathbb{E}[J(x)^\top J(x)]
  =
  \begin{pmatrix}
    G_{AA} & G_{AB}(b)\\
    G_{BA}(b) & G_{BB}(b)
  \end{pmatrix},
  \label{eq:joint-gn-block-form}
\end{equation}
where
\begin{align}
  G_{AA}
  &=\mathbb{E}\!\left[(xx^\top)\otimes
    (D_xB^\top B D_x)\right],
  \label{eq:joint-gn-aa}\\
  G_{AB}(b)
  &=\mathbb{E}\!\left[(x z_b(x)^\top)\otimes
    (D_xB^\top)\right],
  \qquad G_{BA}(b)=G_{AB}(b)^\top,
  \label{eq:joint-gn-ab}\\
  G_{BB}(b)
  &=\mathbb{E}\!\left[z_b(x)z_b(x)^\top\right]
    \otimes I_{d_{\mathrm{out}}}
   =M_b\otimes I_{d_{\mathrm{out}}}.
  \label{eq:gn-bb-block}
\end{align}
Thus $G_{AA}$ is independent of $b$, while the mixed blocks can depend on $b$.
Using $\lambda_B(b)$ from Proposition~\ref{prop:single-down-sharpness}, define
\begin{equation}
  \lambda_A:=\lambda_{\max}(G_{AA}),
  \qquad
  \lambda_G(b):=\lambda_{\max}(G_b).
\end{equation}

\begin{proposition}[Full-GN bound and factor-two guarantee]
\label{prop:full-gn-factor-two}
For every shift $b$,
\begin{equation}
  \max\{\lambda_A,\lambda_B(b)\}
  \le \lambda_G(b)
  \le \lambda_A+\lambda_B(b).
  \label{eq:full-gn-sandwich}
\end{equation}
Let $b^*\in\arg\min_b\lambda_G(b)$. Consequently,
\begin{equation}
  \lambda_G(b^*)\le\lambda_G(\mu)\le 2\lambda_G(b^*).
  \label{eq:full-gn-factor-two}
\end{equation}
\end{proposition}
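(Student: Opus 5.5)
The sandwich bound \eqref{eq:full-gn-sandwich} follows from two standard facts about symmetric positive semidefinite block matrices. The factor-two guarantee then comes from combining that bound with Proposition~\ref{prop:single-down-sharpness}.

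\textbf{Lower bound.} First note that $G_b=\mathbb{E}[J^\top J]\succeq 0$. For the lower bound, take a unit eigenvector $v_A$ of $G_{AA}$ for $\lambda_A$ and embed it as $(v_A,0)$. The Rayleigh quotient of $G_b$ at this vector is $v_A^\top G_{AA}v_A=\lambda_A$, so $\lambda_G(b)\ge\lambda_A$. By the same argument with $(0,v_B)$ and the block $G_{BB}(b)=M_b\otimes I$, we get $\lambda_G(b)\ge\lambda_{\max}(G_{BB}(b))=\lambda_B(b)$. Here we use the identification $\lambda_{\max}(M_b\otimes I)=\lambda_{\max}(M_b)$ from Proposition~\ref{prop:single-down-sharpness}.

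\textbf{Upper bound.} I will use the Jacobian structure rather than a generic Schur-complement argument. For any unit $v=(v_A,v_B)$,
\[
v^\top G_b v=\mathbb{E}\|J_Av_A+J_Bv_B\|^2\le 2\,\mathbb{E}\|J_Av_A\|^2+2\,\mathbb{E}\|J_Bv_B\|^2 .
\]
This crude form would only give $2\lambda_A\|v_A\|^2+2\lambda_B\|v_B\|^2$, so it needs sharpening. Instead, apply Cauchy--Schwarz in $L^2$:
\[
v^\top G_b v\le\Bigl(\sqrt{\mathbb{E}\|J_Av_A\|^2}+\sqrt{\mathbb{E}\|J_Bv_B\|^2}\Bigr)^2\le\Bigl(\sqrt{\lambda_A}\,\|v_A\|+\sqrt{\lambda_B}\,\|v_B\|\Bigr)^2 .
\]
By Cauchy--Schwarz in $\mathbb{R}^2$, using $\|v_A\|^2+\|v_B\|^2=1$, the right-hand side is at most $\lambda_A+\lambda_B(b)$. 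Taking the supremum over unit $v$ gives $\lambda_G(b)\le\lambda_A+\lambda_B(b)$. This step is where a careless bound loses a factor of two, so it is the main point to get right.

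\textbf{Factor-two guarantee.} The inequality $\lambda_G(b^*)\le\lambda_G(\mu)$ holds by the definition of $b^*$ as a minimizer. For the other side, chain the upper bound at $b=\mu$, then Proposition~\ref{prop:single-down-sharpness}, which gives $\lambda_B(\mu)\le\lambda_B(b^*)$ under the assumption $\mathbb{E}[a]=\mu\mathbf 1$, then the lower bound at $b^*$:
\[
\lambda_G(\mu)\le\lambda_A+\lambda_B(\mu)\le\lambda_A+\lambda_B(b^*)\le 2\max\{\lambda_A,\lambda_B(b^*)\}\le 2\lambda_G(b^*).
\]
Two small points need checking along the way. The first is that $\lambda_A$ does not depend on $b$. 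This holds because $J_A$ involves only $B$, $D_x$ and $x$, as noted after \eqref{eq:gn-bb-block}. The second is that the minimizer $b^*$ exists. This holds because $\lambda_G$ is continuous in $b$ and tends to infinity as $|b|\to\infty$, since it is bounded below by $\lambda_B(b)\ge (\mu-b)^2 m$. If existence is not needed, the same chain works with an infimum in place of $b^*$.
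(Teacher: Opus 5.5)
Your proposal is correct and matches the paper's proof. The paper gets the lower bound by restricting to a single block, the upper bound via the $L^2$ triangle inequality followed by Cauchy--Schwarz in $\mathbb{R}^2$, and the factor-two guarantee from the same chain $\lambda_G(\mu)\le\lambda_A+\lambda_B(\mu)\le\lambda_A+\lambda_B(b^*)\le 2\lambda_G(b^*)$; your coercivity argument for the existence of $b^*$ (via $\lambda_B(b)\ge m(\mu-b)^2$) is a small addition the paper leaves implicit.
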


\begin{proof}
Restricting a parameter direction to the $A$ or $B$ block gives the lower
bound. For any perturbation satisfying
$\|\Delta A\|_F^2+\|\Delta B\|_F^2=1$, the triangle inequality and
Cauchy--Schwarz give
\begin{align}
  \mathbb{E}\|J_A\operatorname{vec}(\Delta A)
    +J_B\operatorname{vec}(\Delta B)\|^2
  &\le
  \left(\sqrt{\lambda_A}\|\Delta A\|_F
  +\sqrt{\lambda_B(b)}\|\Delta B\|_F\right)^2 \\
  &\le \lambda_A+\lambda_B(b),
\end{align}
which proves \eqref{eq:full-gn-sandwich}. Since $b=\mu$ minimizes
$\lambda_B(b)$,
\begin{equation}
  \lambda_G(b^*)
  \le \lambda_G(\mu)
  \le \lambda_A+\lambda_B(\mu)
  \le \lambda_A+\lambda_B(b^*)
  \le 2\lambda_G(b^*).
\end{equation}
\end{proof}

We test these bounds in the one-block model across ten seeds. At each training
checkpoint, we hold $A$ and $B$ fixed, compute the average activation mean
$\bar\mu=\mathbb{E}_{x,i}[a_i(x)]$, and sweep $b$ to compare $\lambda_G(b)$
with the bounds in \eqref{eq:full-gn-sandwich}. The minimum over the tested
shifts estimates $\lambda_G(b^*)$. In Figure~\ref{fig:one-block-gn-bounds},
the bounds closely track $\lambda_G(b)$ across shifts (left), while
$\lambda_G(\bar\mu)$ remains close to the empirical minimum throughout
training (right).

\begin{figure}[!htbp]
  \centering
  \includegraphics[width=\linewidth]{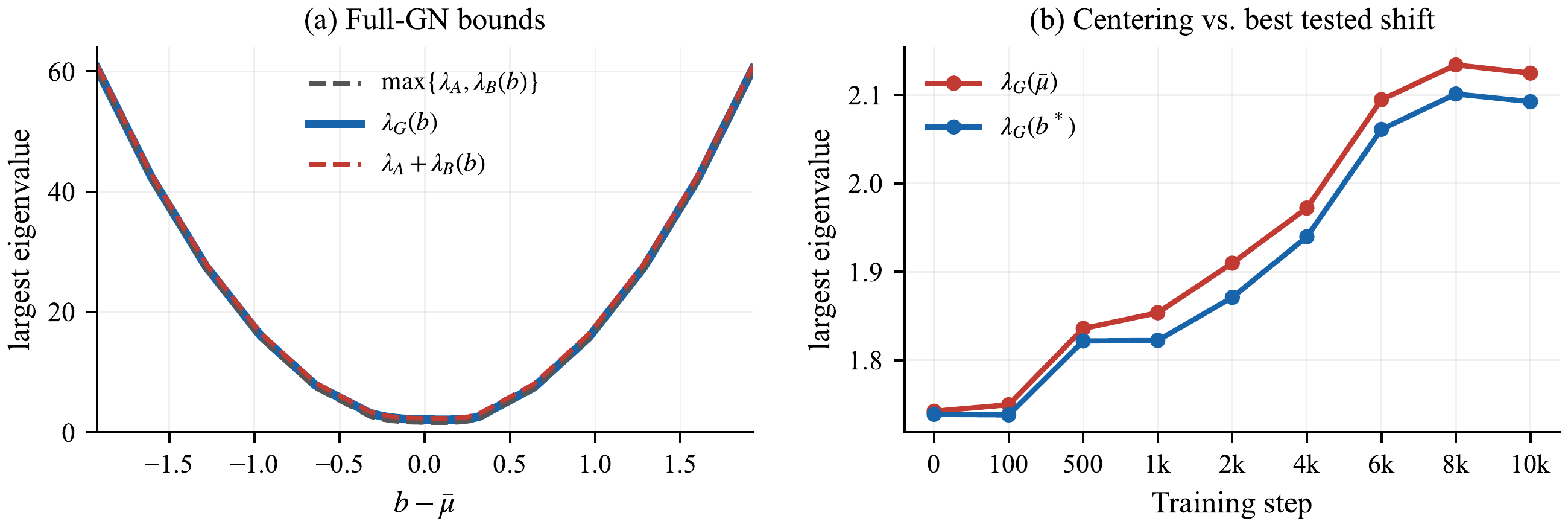}
  \caption{Empirical evaluation of Proposition~\ref{prop:full-gn-factor-two}.
  Left: The theoretical lower and upper bounds closely track the measured
  leading Gauss--Newton eigenvalue $\lambda_G(b)$.
  Right: The value at the average-mean shift, $\lambda_G(\bar\mu)$, remains
  close to the empirical minimum, $\lambda_G(b^*)$, throughout training.}
  \label{fig:one-block-gn-bounds}
\end{figure}

Thus, removing the activation bias yields nearly optimal Gauss--Newton
sharpness in this model, with a factor-two upper bound relative to the optimum.

\subsubsection{From Gauss--Newton to the true Hessian}
\label{app:gn-to-hessian}

Let $r(x)=f_\theta(x)-y(x)$ and
$g(t)=f_{\theta+t\Delta\theta}(x)$. For squared loss,
\begin{equation}
  \Delta\theta^\top H\Delta\theta
  =\underbrace{\mathbb{E}\|g'(0)\|^2}_{\Delta\theta^\top G\Delta\theta}
  +\mathbb{E}\langle r(x),g''(0)\rangle.
  \label{eq:hessian-gn-decomposition}
\end{equation}
For ReLU, consider a local parameter region in which the activation gates
$D_x=\operatorname{diag}(\mathbf{1}_{Ax>0})$ remain fixed for each input $x$. Then
\begin{equation}
  f_{A,B}(x)=B(D_xAx-b\mathbf{1}),
\end{equation}
and along a joint direction $(\Delta A,\Delta B)$,
\begin{equation}
  g''(0)=2\Delta B D_x\Delta A x.
  \label{eq:fixed-gate-second-derivative}
\end{equation}
The diagonal blocks satisfy $H_{AA}=G_{AA}$ and $H_{BB}=G_{BB}$. For a unit
joint direction,
\begin{equation}
  \left|2\mathbb{E}\langle r,\Delta B D_x\Delta A x\rangle\right|
  \le \mathbb{E}[\|r\|\,\|x\|]
  =:R_b,
\end{equation}
so
\begin{equation}
  \left|\lambda_{\max}(H_b)-\lambda_{\max}(G_b)\right|
  \le \|H_b-G_b\|_2\le R_b.
  \label{eq:hessian-gn-residual-bound}
\end{equation}
In particular,
\begin{equation*}
  \mathcal{L}_b=0\quad\Longrightarrow\quad H_b=G_b.
\end{equation*}

Figure~\ref{fig:one-block-hessian-gn-bridge} shows that the largest eigenvalues
of the Hessian and Gauss--Newton matrix at $b=\bar\mu$ remain close for most of training.

\begin{figure}[!htbp]
  \centering
  \includegraphics[width=0.68\linewidth]{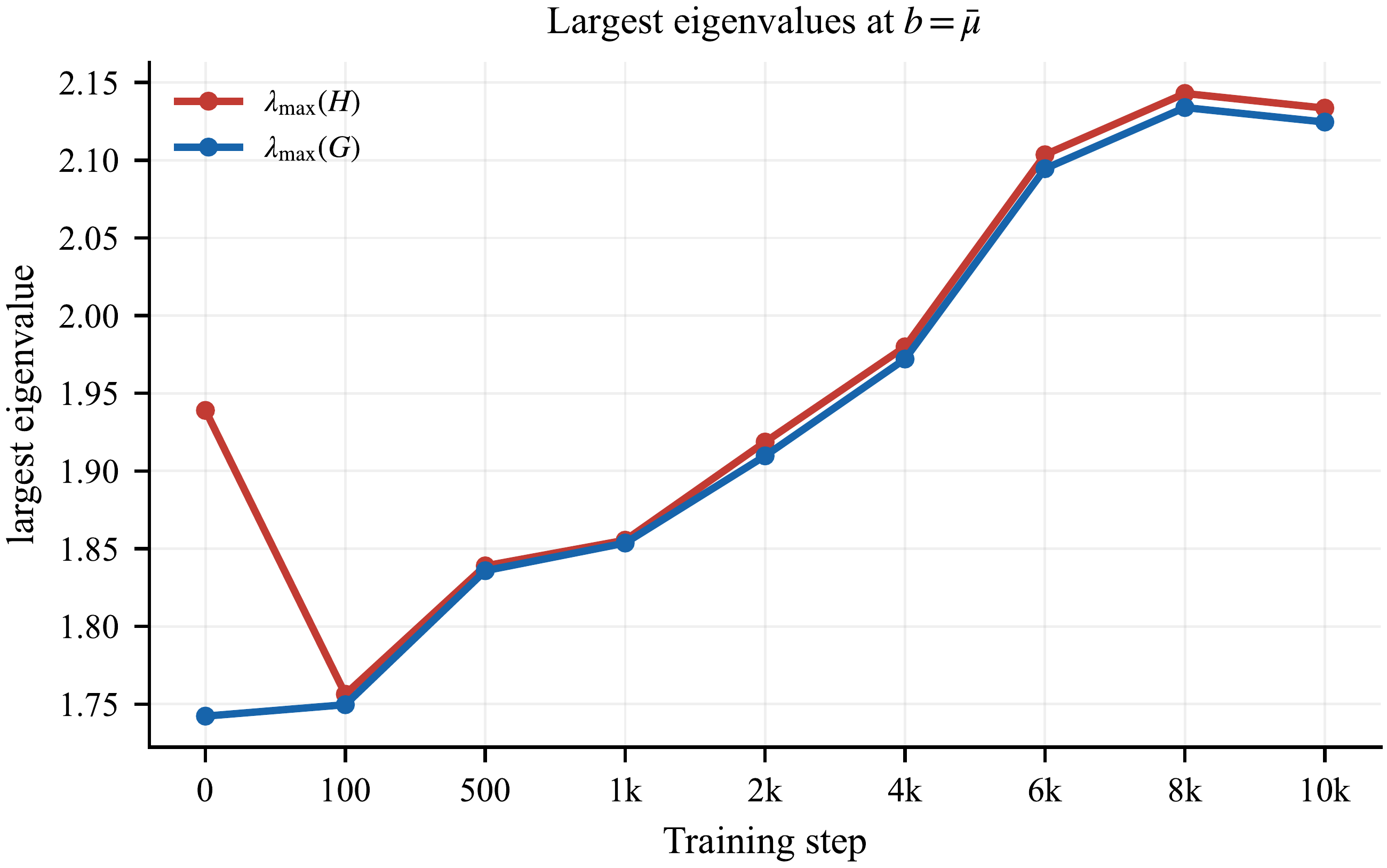}
  \caption{After 100 updates, the largest eigenvalues of the exact Hessian and
  its Gauss--Newton approximation remain close at $b=\bar\mu$.}
  \label{fig:one-block-hessian-gn-bridge}
\end{figure}

\subsubsection{Multiple down-projections with fixed activations}
\label{app:deep-sharpness-extension}

We finally consider multiple trainable down-projections with fixed activations:
\begin{equation}
  f(x)=\sum_{\ell=1}^{L}B_\ell
  \left(a_\ell(x)-b_\ell\mathbf{1}\right),
  \qquad
  \mathbb{E}[a_\ell]=\mu_\ell\mathbf{1}.
  \label{eq:frozen-multilayer-model}
\end{equation}
Define
\begin{equation}
  \Sigma_{\mathrm{stack}}
  :=\operatorname{Cov}\!\left(
  \begin{pmatrix}
    a_1(x)\\
    \vdots\\
    a_L(x)
  \end{pmatrix}
  \right),
  \qquad
  v_{\mathbf b}:=
  \begin{pmatrix}
    (\mu_1-b_1)\mathbf{1}\\
    \vdots\\
    (\mu_L-b_L)\mathbf{1}
  \end{pmatrix}.
\end{equation}
Then
\begin{equation}
  H_{\mathrm{joint}}(\mathbf b)=G_{\mathrm{joint}}(\mathbf b)
  =\left(\Sigma_{\mathrm{stack}}
  +v_{\mathbf b}v_{\mathbf b}^\top\right)\otimes I_{d_{\mathrm{out}}}.
  \label{eq:frozen-multilayer-decomposition}
\end{equation}
Thus, setting $b_\ell=\mu_\ell$ for every layer minimizes the joint sharpness.

\subsection{Activation mean and parameter gradients}
\label{app:parameter-gradients}

We next study the different update directions of unshifted and shifted models during
the early stage of training. We analyze a simplified setting
in which only a down-projection layer is trainable. Suppose
\begin{equation}
  \mathbb{E}[a(x)]=\mu\mathbf{1},
  \qquad
  \operatorname{Cov}(a(x))=qI,
  \qquad
  y(x)=B_\star a(x).
\end{equation}
The unshifted model uses $a(x)$, while the shifted model uses
\begin{equation}
  z_\mu(x)=a(x)-\mu\mathbf{1}.
\end{equation}
Both models start from $B_{\mathrm{init}}$ and use gradient descent with
learning rate $\eta$.
Let $B_{k-1}^{(0)}$ and $B_{k-1}^{(\mu)}$ denote their parameters before the
$k$th update, and define
\begin{equation}
  g_{0,k}:=\nabla_B\mathcal{L}_0\!\left(B_{k-1}^{(0)}\right),
  \qquad
  g_{\mu,k}:=\nabla_B\mathcal{L}_\mu\!\left(B_{k-1}^{(\mu)}\right).
  \label{eq:parameter-gradient-definition}
\end{equation}

Let
\begin{equation}
  e=\frac{\mathbf{1}}{\sqrt{m}}.
\end{equation}
For the unshifted and shifted models,
\begin{equation}
  M_0=qI+m\mu^2ee^\top,
  \qquad
  M_\mu=qI.
\end{equation}
Along $e$, the eigenvalue of $M_0$ is $\kappa=(q+m\mu^2)/q$ times that of
$M_\mu$.
The loss gradients satisfy
\begin{equation}
  \nabla_B\mathcal{L}_0(B)=(B-B_\star)M_0,
  \qquad
  \nabla_B\mathcal{L}_\mu(B)=(B-B_\star)M_\mu.
\end{equation}

Let
\begin{equation}
  \Delta
  =B_{\mathrm{init}}-B_\star
  =\Delta_{\parallel}+\Delta_{\perp},
\end{equation}
where
\begin{equation}
  \Delta_{\parallel}=\Delta ee^\top,
  \qquad
  \Delta_{\perp}=\Delta(I-ee^\top).
\end{equation}
The gradients used for the $k$th update satisfy
\begin{equation}
  \begin{aligned}
  g_{0,k}
  &=q(1-\eta q)^{k-1}
    \left(\Delta_{\perp}+t_k\Delta_{\parallel}\right),\\
  g_{\mu,k}
  &=q(1-\eta q)^{k-1}\Delta,
  \end{aligned}
\end{equation}
where
\begin{equation}
  t_k
  =\kappa
  \left(
    \frac{1-\eta\kappa q}{1-\eta q}
  \right)^{k-1}.
  \label{eq:gradient-direction-ratio}
\end{equation}

Suppose
\begin{equation}
  \|\Delta_{\parallel}\|_F^2
  =\frac{1}{m}\|\Delta\|_F^2.
\end{equation}
Then the angle $\psi_k$ between the gradients satisfies
\begin{equation}
  \cos\psi_k
  =\frac{\operatorname{tr}(g_{0,k}g_{\mu,k}^\top)}
  {\|g_{0,k}\|_F\|g_{\mu,k}\|_F}
  =\frac{m-1+t_k}
  {\sqrt{m\left(m-1+t_k^2\right)}}.
  \label{eq:parameter-gradient-angle}
\end{equation}

Using the ReLU calculation in Appendix~\ref{app:optimal-b},
\begin{equation}
  \mu=\frac{1}{\sqrt{6\pi}},
  \qquad
  q=\mathbb{E}[\operatorname{ReLU}(Z)^2]-\mu^2
  =\frac{1}{6}-\mu^2
  =\frac{\pi-1}{6\pi}.
\end{equation}
Table~\ref{tab:parameter-gradient-angles} compares our one-layer theory prediction
with full-model gradient angles measured during Adam training.

\begin{table}[h]
  \centering
  \caption{Angles between the unshifted and shifted parameter gradients.
  The theoretical and measured angles are close early in training.}
  \label{tab:parameter-gradient-angles}
  \begin{tabular}{rcc}
    \toprule
    Step $k$ & Theory angle & Full-model angle \\
    \midrule
    1    & $77.45^\circ$ & $76.25^\circ$ \\
    5    & $77.44^\circ$ & $75.01^\circ$ \\
    20   & $77.40^\circ$ & $69.22^\circ$ \\
    100  & $77.22^\circ$ & $46.58^\circ$ \\
    \bottomrule
  \end{tabular}
\end{table}

Both the one-layer calculation and the full-model measurements show that
the gradients of unshifted and shifted models are nearly orthogonal at the beginning of
training.

\section{Ablation studies}
\label{app:toy-model-ablations}

This section tests whether the effect of activation bias is specific to the
model configuration. We first examine a joint sweep over
model size, then study how the optimal shifting changes with model size,
and finally vary other configurations of the controlled model.

\subsection{Joint sweep over dimension, width, and depth}
\label{app:joint-scale-sweep}

We first compare shifted and unshifted ReLU models across a joint sweep of
residual-stream dimension $d$, MLP hidden width $m$, and number of blocks $L$.
Figure~\ref{fig:architecture-grid} shows the results for $L=10$ and $L=20$.
The improvement from reducing activation bias is generally larger at greater
depth and width.

\begin{figure}[!htbp]
  \centering
  \includegraphics[width=\linewidth]{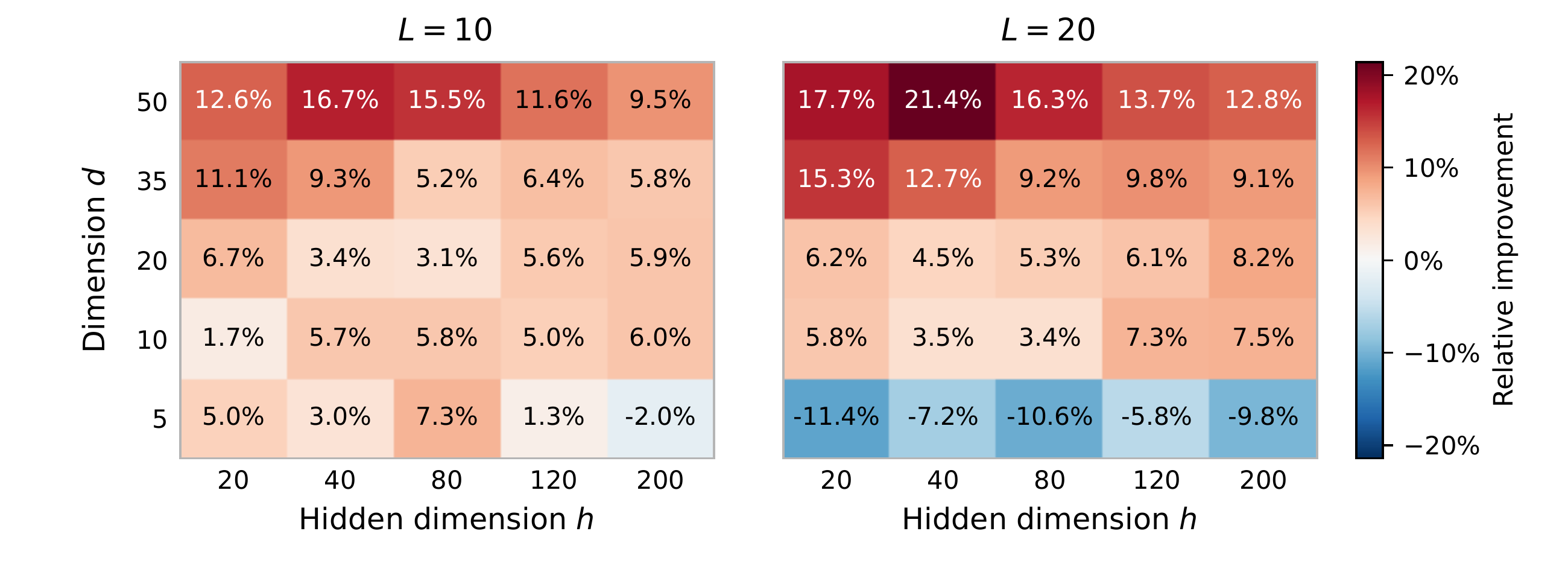}
  \caption{Shifting tends to yield greater improvements as model width and depth increase.
  For ReLU, colors show the relative improvement
  $(\mathcal L_{\mathrm{unshifted}}-\mathcal L_{\mathrm{shifted}})/
  \mathcal L_{\mathrm{unshifted}}$, with red indicating improvement and blue
  indicating deterioration.}
  \label{fig:architecture-grid}
\end{figure}
\newpage

\subsection{Seed-level optimal shifts across model sizes}
\label{app:optimal-b-across-scales}

We next compare three representative controlled ReLU models: a small model
$(d,m,L)=(5,20,2)$, a medium model $(20,80,10)$, and a large model
$(50,200,20)$. We use 100 paired seeds for each model. For each seed,
the empirically optimal shift is the value of $b$ with the lowest final loss.
Figure~\ref{fig:optimal-b-across-scales} shows the distribution of these shifts
across seeds for each model size. As model size increases, the optimal shifts
concentrate near the theory-predicted value, suggesting that reducing
post-activation bias becomes more consistently beneficial across seeds.

\begin{figure}[!htbp]
  \centering
  \includegraphics[width=\linewidth]{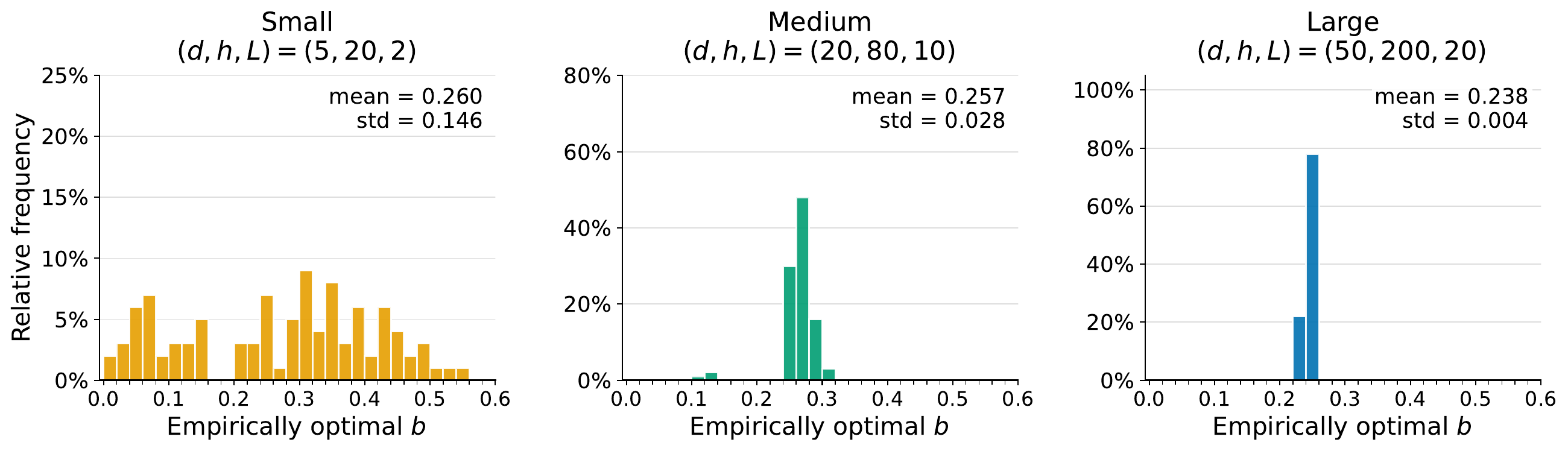}
  \caption{As model size increases, the empirically optimal shifts across seeds
  become more concentrated near the theory-predicted shift.}
  \label{fig:optimal-b-across-scales}
\end{figure}
\FloatBarrier

\subsection{Robustness to architectural changes and cross-entropy loss}
\label{app:architectural-robustness}

We finally test whether reducing activation bias remains effective under three
changes to the controlled model. First, we replace pre-norm blocks with
post-norm blocks, moving LayerNorm from the block input to the residual-sum
output. This tests whether a fixed shift remains effective when the
distributions entering the activations can vary more across blocks. Second, we
enable trainable biases in the teacher's and students' linear projections to
test whether learned affine offsets can compensate for activation bias. Third,
we replace mean-squared error with soft-target cross-entropy to test a different
training objective. All three variants use ReLU, with all other settings
unchanged.

Figure~\ref{fig:ablation-loss-residuals} shows that the shifted model achieves
lower loss than the unshifted model in all three variants. Thus, its advantage
is not limited to pre-norm blocks, bias-free projections, or mean-squared error.

\begin{figure}[!htbp]
  \centering
  \includegraphics[width=\linewidth]{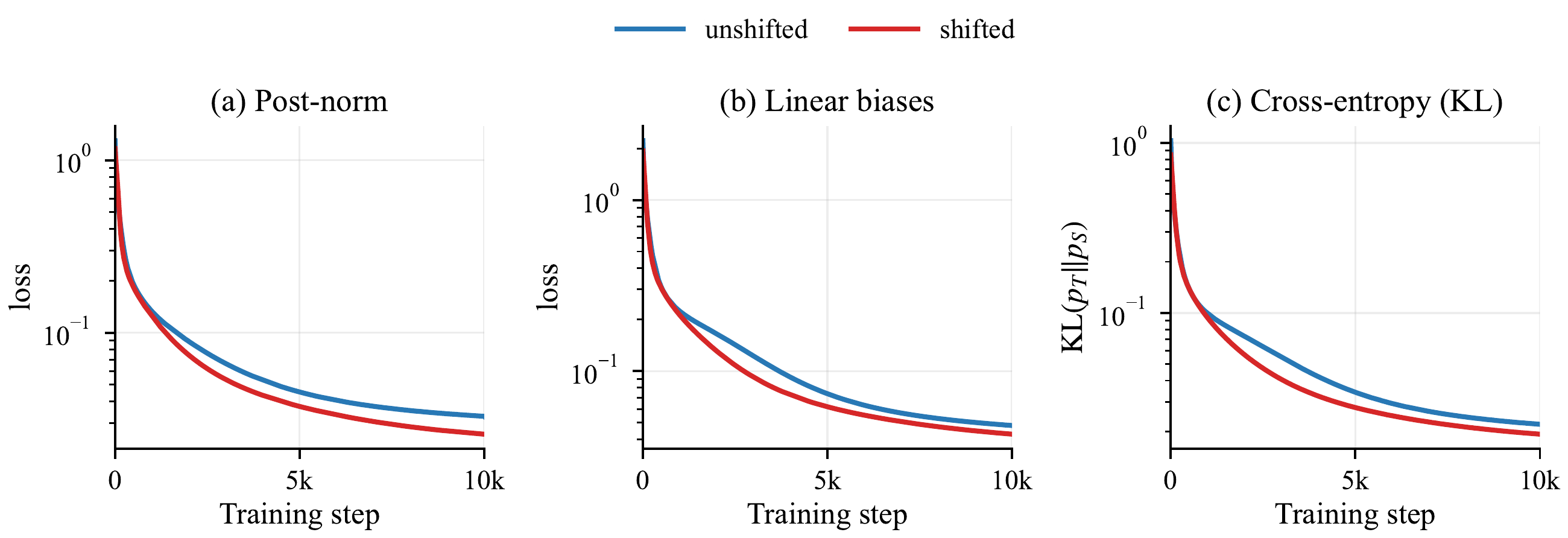}
  \caption{The shifted model retains its loss advantage in more general settings.}
  \label{fig:ablation-loss-residuals}
\end{figure}
\FloatBarrier
\subsection{Residual-stream trajectories across all blocks}
\label{app:residual-stream-all-blocks}

Figure~\ref{fig:toy-residual-all-blocks} shows the complete hidden-state mean
trajectories across all 20 blocks for the ReLU experiment in
Figure~\ref{fig:toy-loss-overshoot}. The unshifted model exhibits a pronounced
transient overshoot across most of the blocks, whereas shifting the
activation substantially reduces this overshoot. 
\begin{figure}[!htbp]
  \centering
  \includegraphics[width=\linewidth]{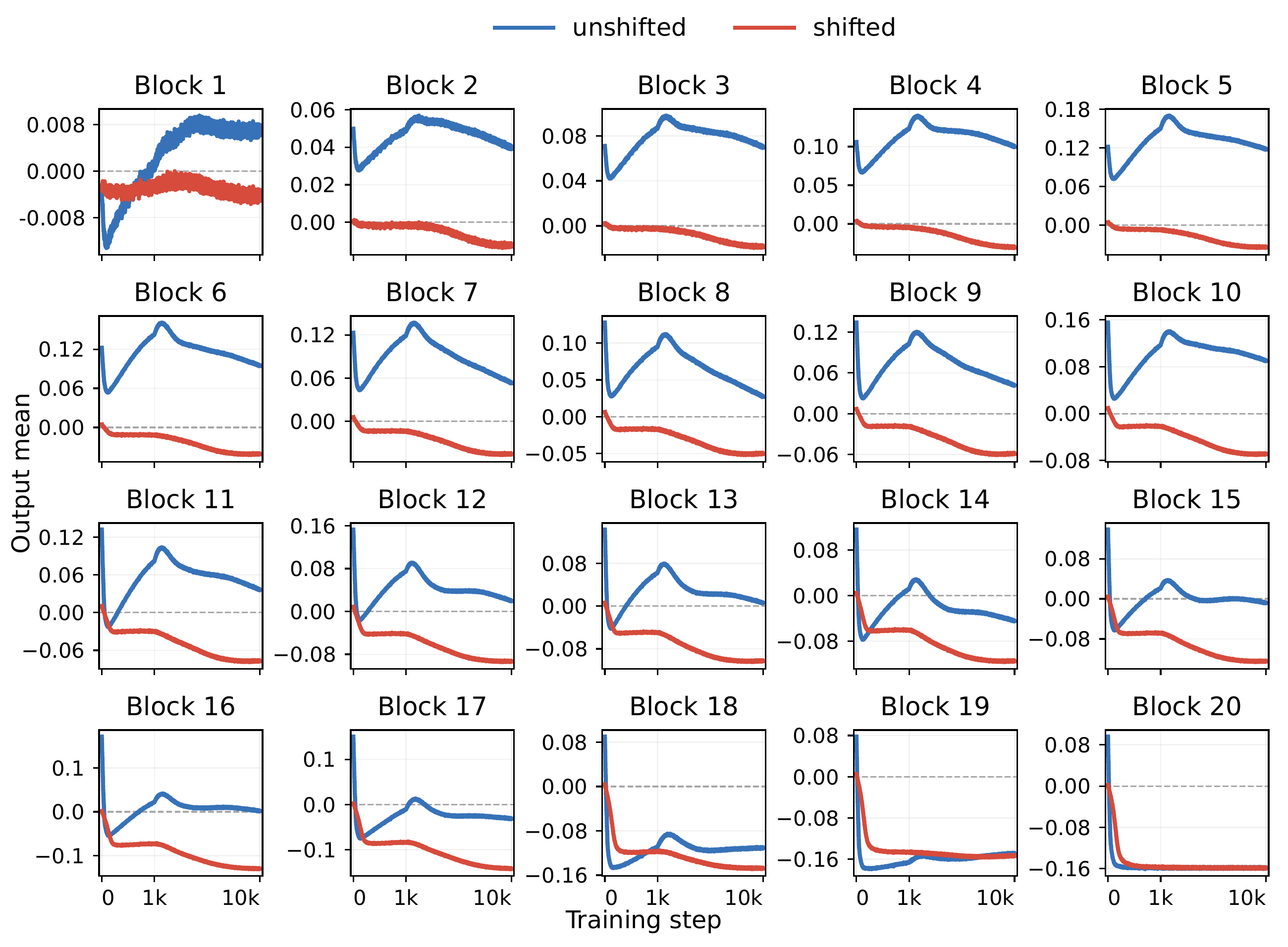}
  \caption{Reducing activation bias suppresses
  hidden-state overshoot and yields smoother hidden-state mean trajectories.}
  \label{fig:toy-residual-all-blocks}
\end{figure}
\clearpage

\section{Additional VGG-Style Autoencoder Experiments}
\label{app:vgg-reconstruction}

This appendix provides the architecture and additional experimental results for the simplified VGG-style
autoencoder introduced in Section~\ref{sec:feature-learning-setup}.
Figure~\ref{fig:vgg-autoencoder-architecture} shows the architecture.

\begin{figure}[htbp]
  \centering
  \includegraphics[width=\linewidth]{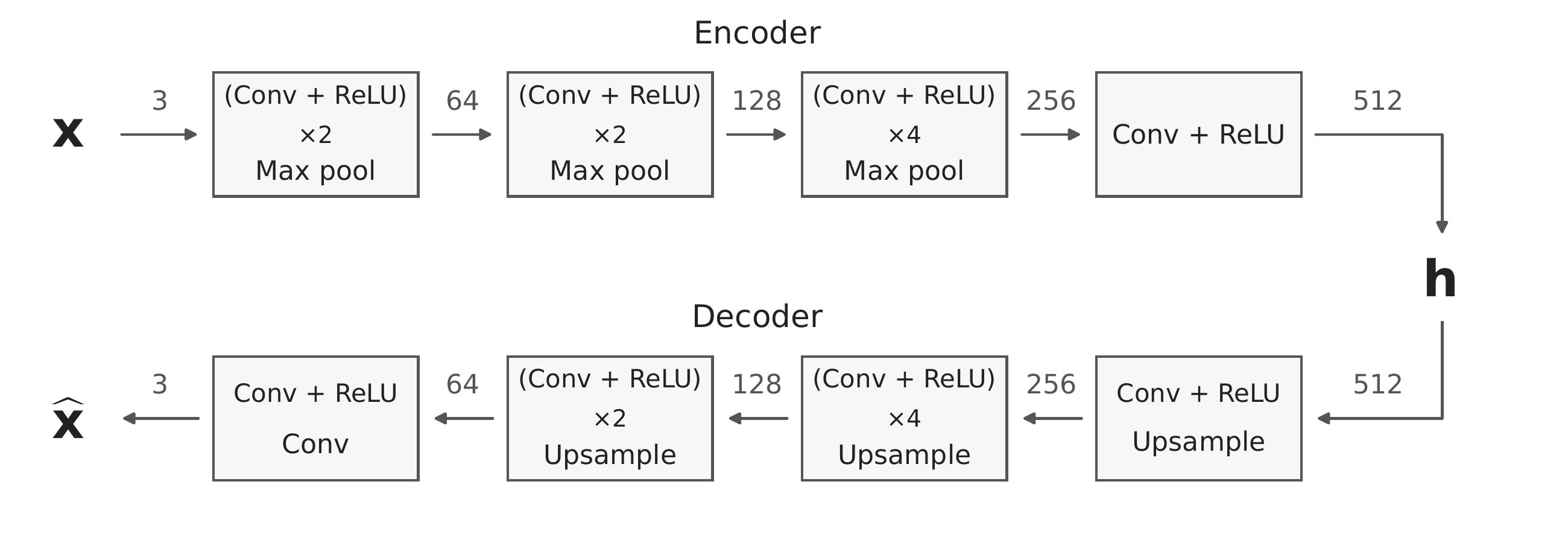}
  \caption{Architecture of the autoencoder. Numbers on arrows denote
  channel counts. The shifted model replaces each ReLU
  with $\operatorname{ReLU}(z)-b$.}
  \label{fig:vgg-autoencoder-architecture}
\end{figure}

We also track the reconstruction loss throughout training.
Figure~\ref{fig:vgg-loss-residual} shows that the shifted model initially has
a higher loss than the unshifted model but achieves a lower loss later.
\begin{figure}[htbp]
  \centering
  \includegraphics[width=0.55\linewidth]{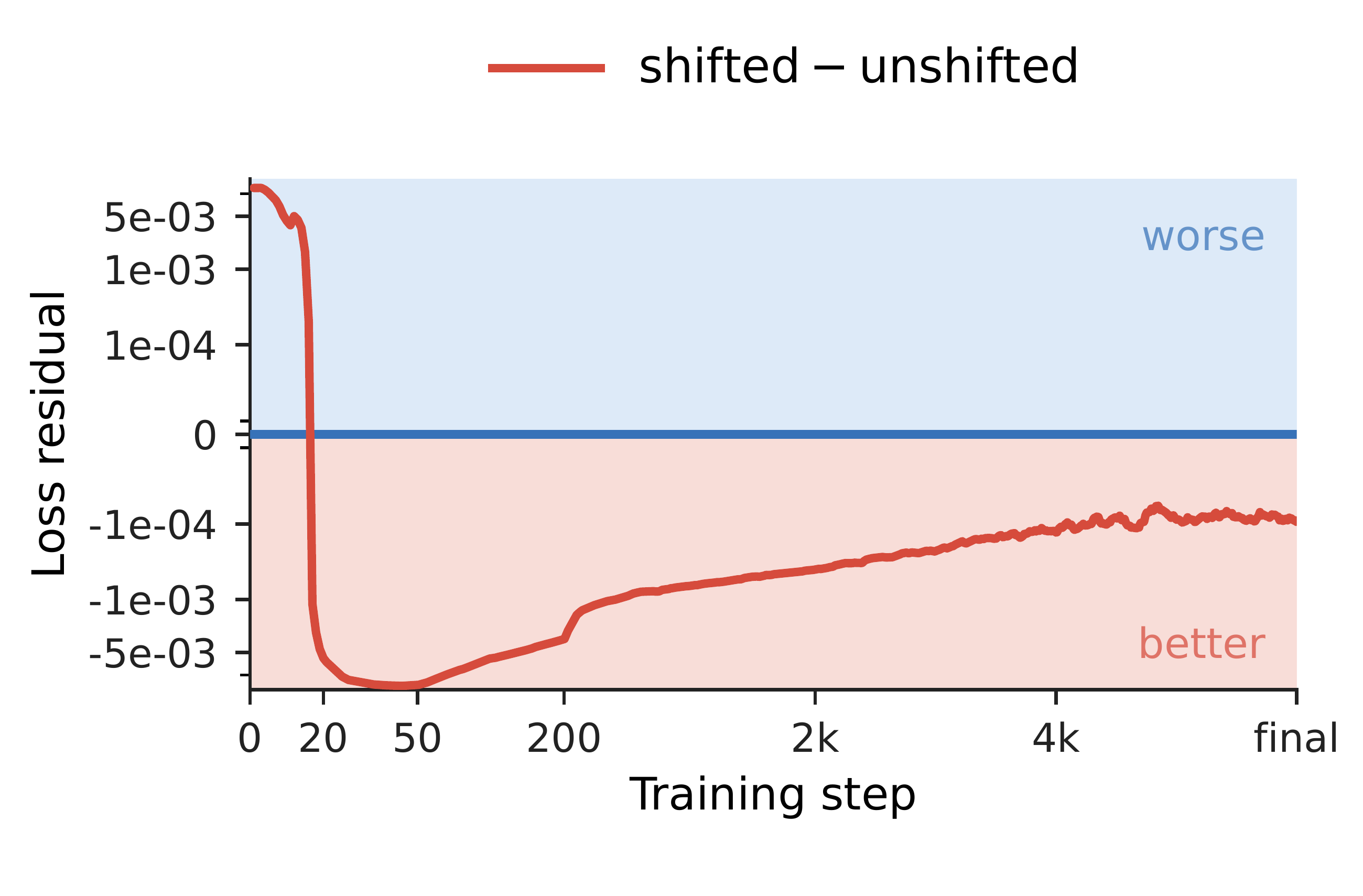}
  \caption{The shifted model initially has a higher reconstruction loss but
  achieves a lower loss later in training. The plotted residual is
  $L_{\mathrm{shifted}}-L_{\mathrm{unshifted}}$.}
  \label{fig:vgg-loss-residual}
\end{figure}

Figure~\ref{fig:vgg-reconstruction-examples} provides additional examples of
the component-wise reconstructions analyzed in
Section~\ref{sec:bias-component-analysis}. Across these images, the unshifted
model's activation bias component retains recognizable image structure, whereas the
corresponding reconstructions from the shifted model contain little
discernible content.

\clearpage
\begin{figure}[p]
  \centering
  \includegraphics[width=\linewidth]{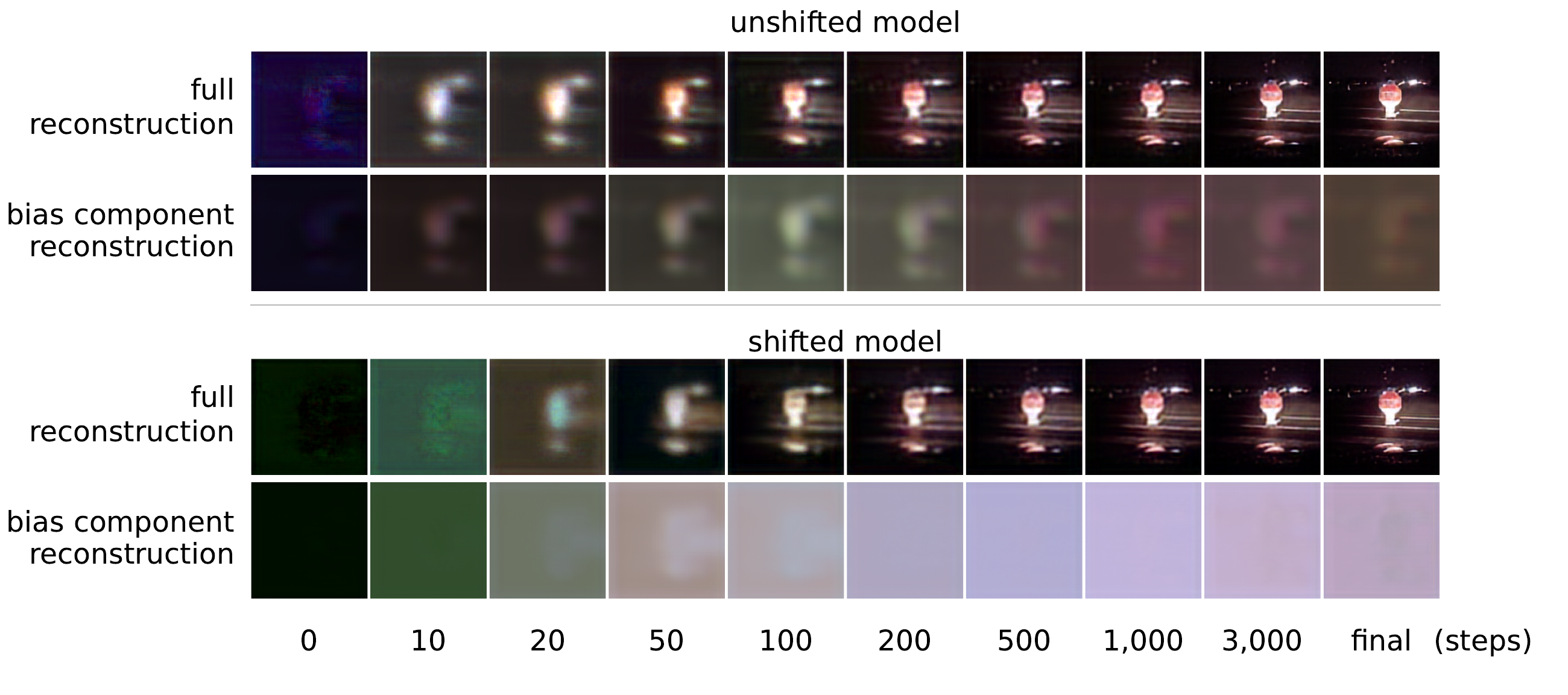}
  \par\vspace{0.2em}

  \includegraphics[width=\linewidth]{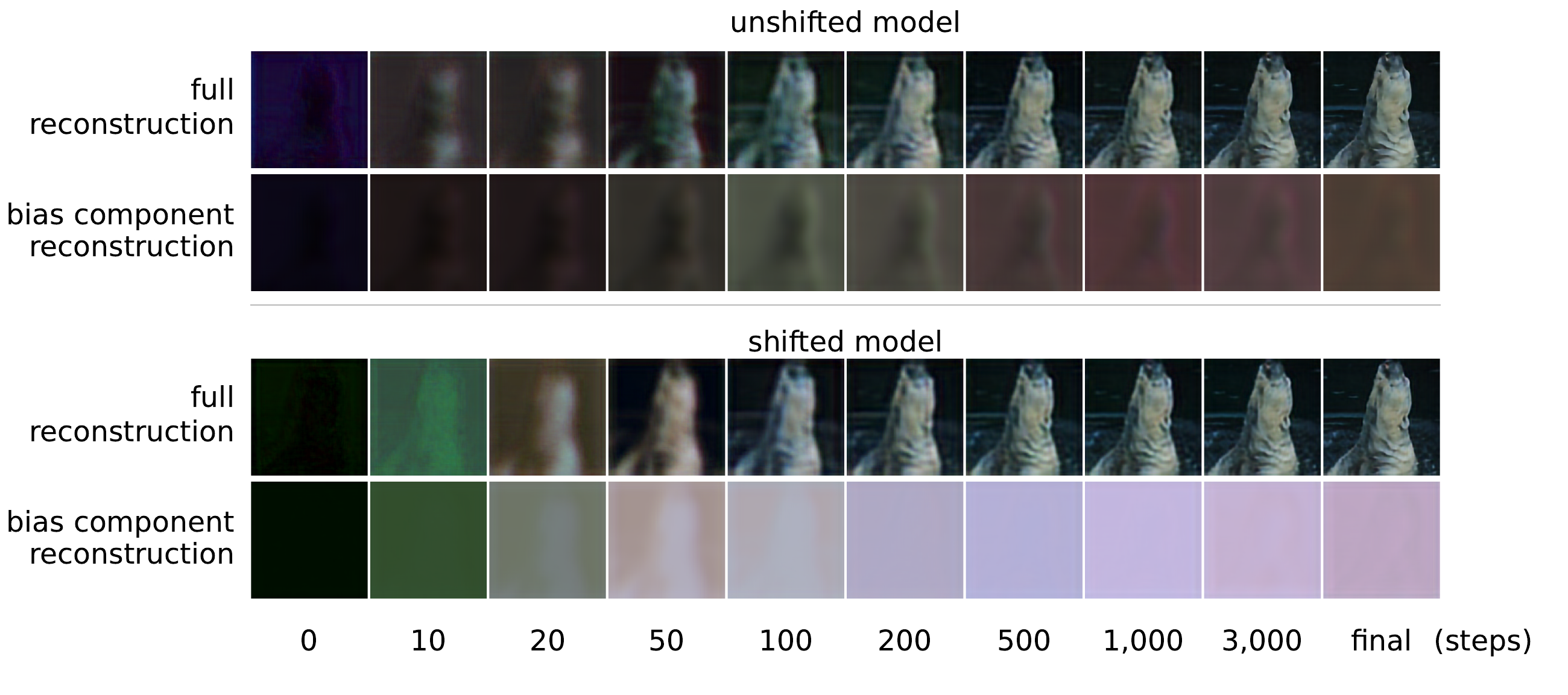}
  \par\vspace{0.2em}

  \includegraphics[width=\linewidth]{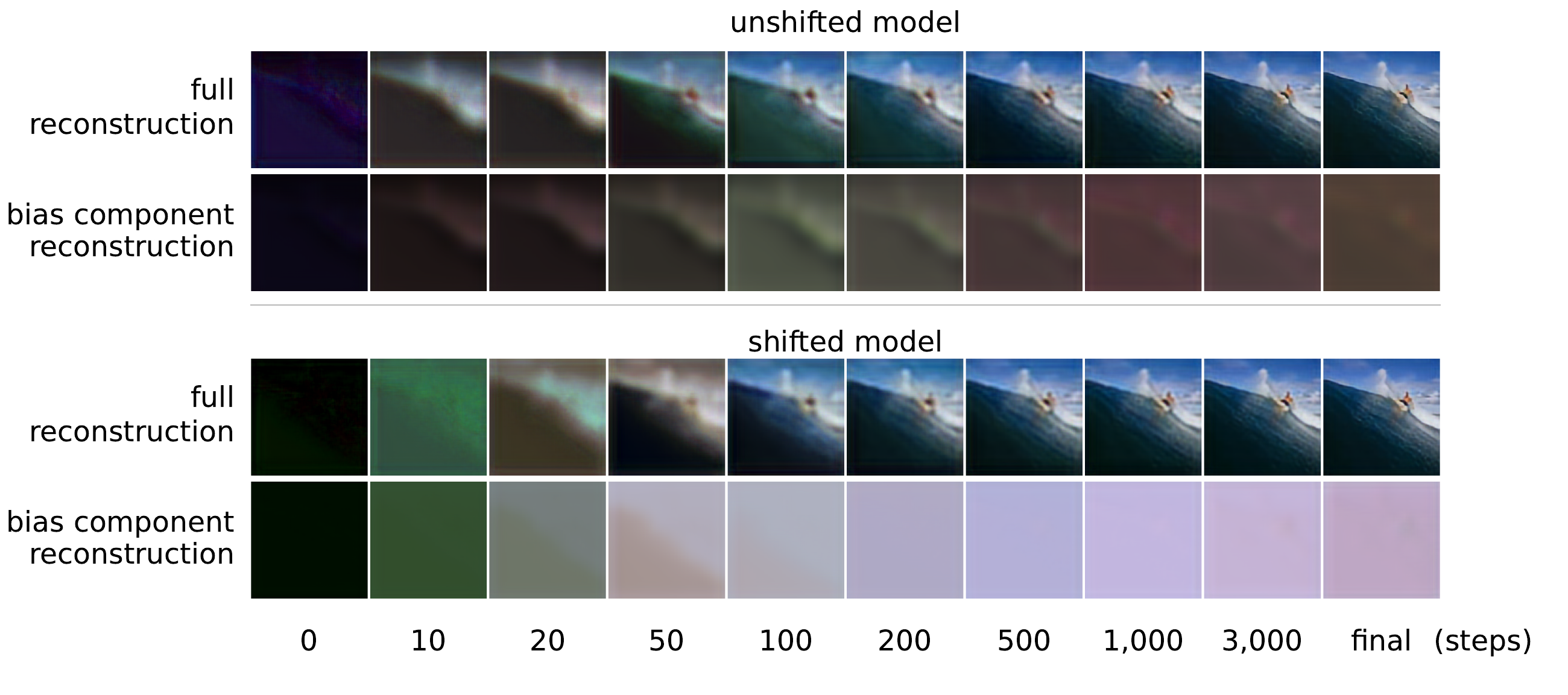}
  \caption{Across different images, reconstructions from the unshifted model's
  activation bias component first approach and then diverge from the full reconstructions,
  whereas those from the shifted model contain little recognizable structure.}
  \label{fig:vgg-reconstruction-examples}
\end{figure}
\clearpage

\section{Additional Scale-Up Experiments}
\label{app:additional-scaleup}

This appendix provides additional scale-up results on performance, hidden-state
dynamics, and activation distributions.

\subsection{Performance on Additional Scale-Up Models}
\label{app:additional-scaleup-losses}

We test whether the gains reported in Section~\ref{sec:scale-up} extend across
additional model sizes, tasks, and metrics. We evaluate ViT-L on ImageNet image
classification, OLMo-2-1B~\citep{olmo2025olmo2} on language pretraining,
and HyenaDNA~\citep{nguyen2023hyenadna} on DNA base-sequence prediction.
We also evaluate DiT-B/2 on image generation using
FID. Figure~\ref{fig:additional-scaleup-losses} reports the results. With
adaptive shifting, ViT-L achieves lower test loss and higher accuracy
despite a higher training loss; DiT achieves lower FID at every evaluated
checkpoint, while OLMo-2-1B and HyenaDNA achieve better performance. Thus, the
benefit extends beyond the models and metrics highlighted in the main text.

\begin{figure}[!htbp]
  \centering
  \includegraphics[width=\linewidth]{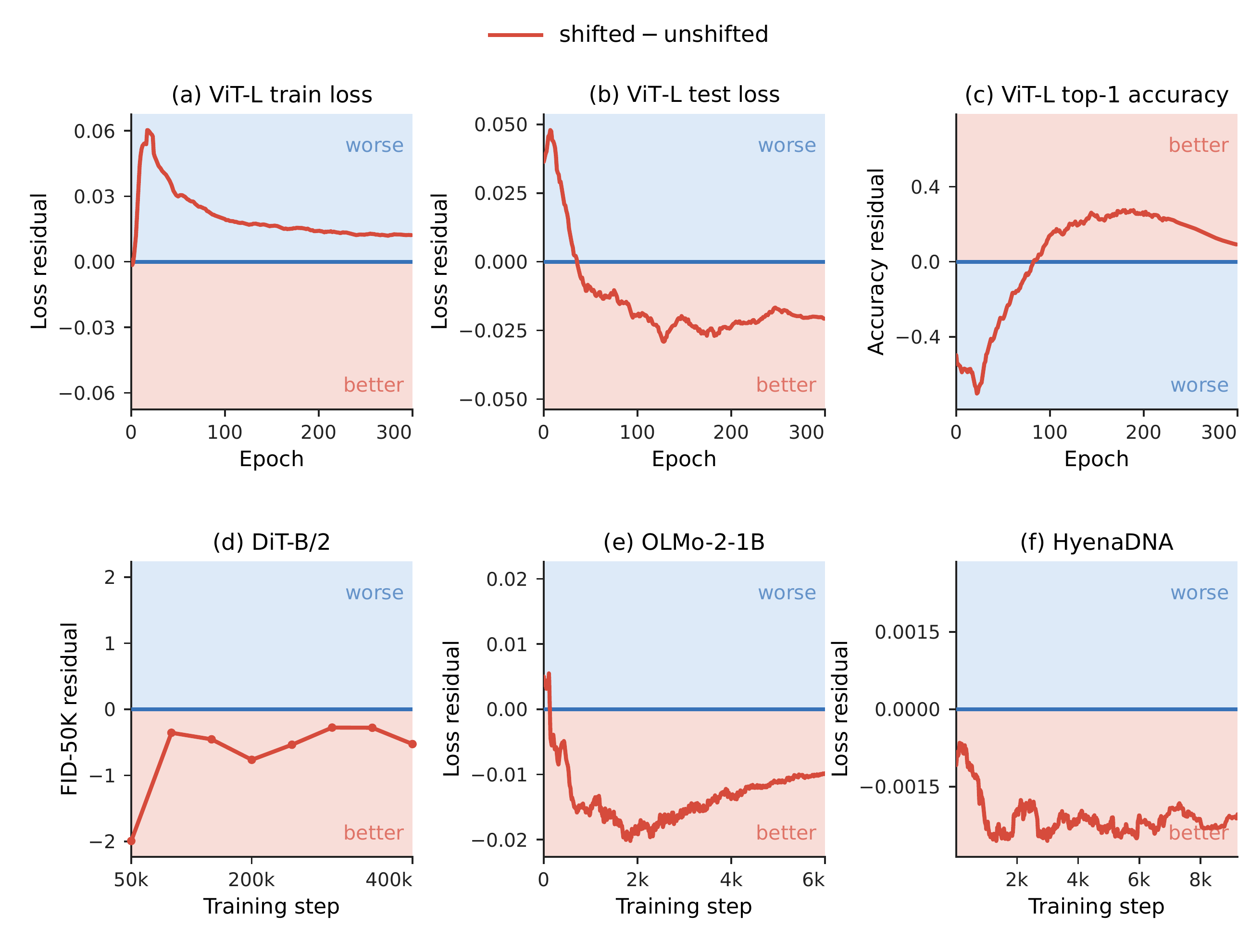}
  \caption{The performance improvements from shifting extend to additional
  scale-up models and tasks. Residuals are computed as
  $\Delta M=M_{\mathrm{shifted}}-M_{\mathrm{unshifted}}$, where $M$ denotes
  loss, FID, or accuracy.}
  \label{fig:additional-scaleup-losses}
\end{figure}

\subsection{Further Examples of Hidden-State Overshoot}
\label{app:scaleup-overshoot}

We examine whether the hidden-state overshoot observed in
Section~\ref{sec:scale-up-vit} also appears in other models. We evaluate publicly
released training checkpoints from Pythia-410M~\citep{biderman2023pythia},
OLMo-2-1B~\citep{olmo2025olmo2}, and
Amber-7B~\citep{liu2023llm360} on a fixed set of inputs and compute
hidden-state means as in Section~\ref{sec:hidden-state-overshoot}.
Figure~\ref{fig:official-checkpoint-residual-probes} presents representative
trajectories from these models. Across them,
representative blocks exhibit transient overshoots that subsequently recede or
reverse, although their timing and direction vary. 

\begin{figure}[!htbp]
  \centering
  \includegraphics[width=\linewidth]{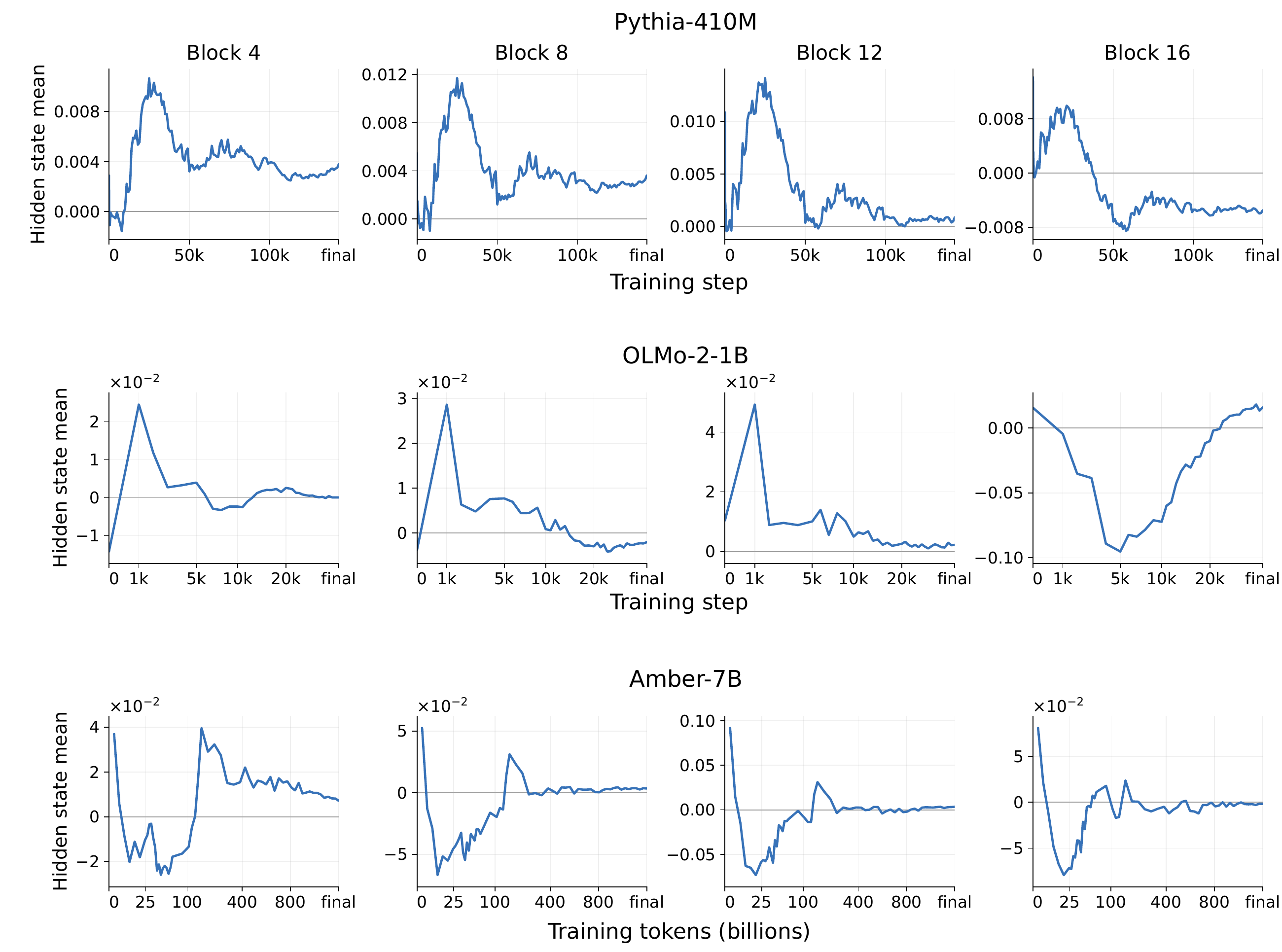}
  \caption{Hidden-state mean trajectories exhibit overshoot across different
  scale-up models.}
  \label{fig:official-checkpoint-residual-probes}
\end{figure}

\subsection{Activation-Magnitude Distributions in OLMo-2-1B}
\label{app:scaleup-activation-distributions}

Our analysis so far has focused on activation bias, but the mean alone cannot
show how shifting affects the broader distribution of activation magnitudes.
We therefore compare the absolute-value distributions of post-activations and
residual-stream states. Figure~\ref{fig:olmo-absolute-value-distributions} shows
that, with shifting, the post-activation distributions change relatively little
over training. In the unshifted model, they develop an increasingly pronounced
concentration near zero in deeper blocks. The residual-stream distributions,
however, remain broadly similar across the two models and more closely resemble
the shifted model's post-activation distributions. This contrast suggests that
the unshifted model must map a more sharply concentrated post-activation
distribution to a comparatively smooth residual-stream distribution. Such a
larger distributional adjustment may make learning less direct, offering a
possible explanation for the improved dynamics observed when activation bias
is reduced.

\begin{figure}[!htbp]
  \centering
  \includegraphics[width=\linewidth]{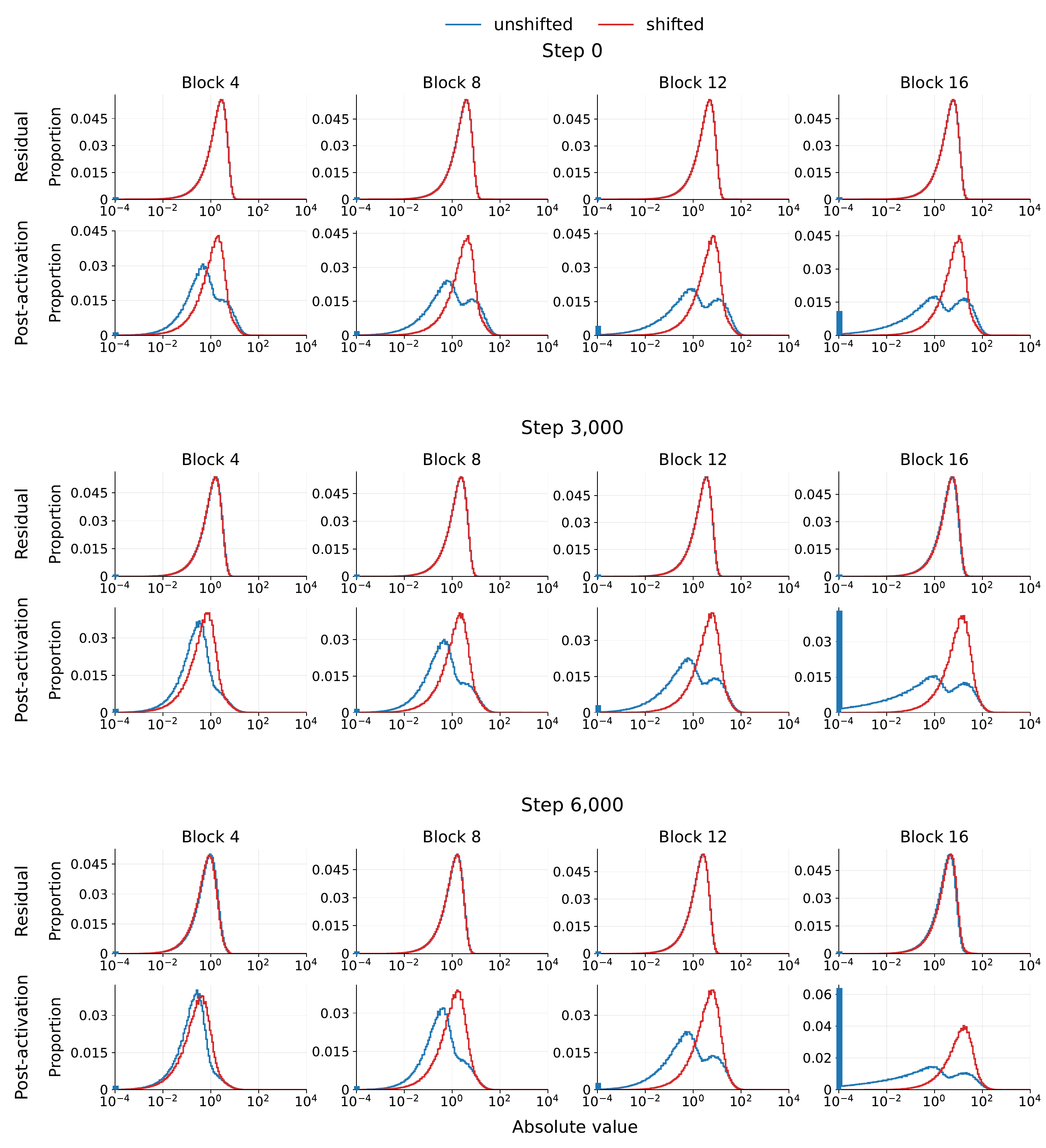}
  \caption{In OLMo-2-1B, the shifted model maintains more stable post-activation
  magnitude distributions over training, with less accumulation of near-zero
  values. Moreover, its post-activation and residual-stream magnitude
  distributions closely resemble each other, whereas those of the unshifted
  model differ substantially.}
  \label{fig:olmo-absolute-value-distributions}
\end{figure}

\end{document}